\newtheorem{theorem}{Theorem}
\newtheorem{lemma}[theorem]{Lemma} 
\newtheorem{remark}[theorem]{Remark}
\newtheorem{corollary}[theorem]{Corollary}
\theoremstyle{definition}
\newcommand{\R}{\mathbb{R}}
\newcommand{\N}{\mathbb{N}}
\renewcommand{\d}{\mathrm{d}}
\renewcommand{\P}{\mathcal P_2}
\newcommand{\E}{\mathbb E}
\newcommand{\Sp}{\mathbb S}
\newcommand{\F}{\mathcal F}
\newcommand{\V}{\mathcal V}
\newcommand{\zb}{\bm}
\newcommand{\Beta}{\mathrm{B}}
\newcommand\dx{\mathrm{d}}
\title{Generative Sliced MMD Flows with Riesz \\ Kernels}
\author{
  Johannes Hertrich$^1$, Christian Wald$^2$, Fabian Altekr\"uger$^3$, Paul Hagemann$^2$ \\
  $^1$ University College London, 
  $^2$ Technische Universit\"at Berlin,
  $^3$ Humboldt-Universit\"at zu Berlin\\
  Correspondence to: \texttt{j.hertrich@ucl.ac.uk}
}
\begin{document}

\maketitle

\begin{abstract}
Maximum mean discrepancy (MMD) flows 
suffer from high computational costs in large scale computations.
In this paper, we show that MMD flows with Riesz kernels $K(x,y) = - \|x-y\|^r$, $r \in (0,2)$
have exceptional properties which allow their efficient computation.
We prove that the MMD of Riesz kernels, which is
also known as energy distance, coincides with the MMD of their sliced version.
As a consequence, the computation of gradients of MMDs can be performed in the one-dimensional setting.
Here, for $r=1$, a simple sorting algorithm can be applied to reduce the complexity
from $O(MN+N^2)$ to $O((M+N)\log(M+N))$ 
for two  measures with $M$ and $N$ support points.
As another interesting follow-up result, the MMD of compactly supported measures
can be estimated from above and below by the Wasserstein-1 distance.
For the implementations we approximate the gradient of the sliced MMD by using only a finite number $P$ of slices. 
We show that the resulting error has complexity \smash{$O(\sqrt{d/P})$}, where $d$ is the data dimension. 
These results enable us to train generative models by approximating MMD gradient flows by neural networks even
for image applications. We demonstrate the efficiency of our model by image generation on MNIST, FashionMNIST and CIFAR10. 
\end{abstract}

%-----------------------------------------------
\section{Introduction}
%-----------------------------------------------

With the rise of generative models, the field of gradient flows in measure spaces received increasing attention.
Based on classical Markov chain Monte Carlo methods, 
\citet{WT2011} proposed 
to apply the Langevin dynamics for inferring samples from a known probability density function.
This corresponds to simulating a Wasserstein gradient flow with respect to the Kullback-Leibler divergence, see \cite{JKO1998}.
Closely related to this approach are current state-of-the-art image generation methods like score-based models \citep{SE2019,SE2020} or diffusion models \citep{ho2020denoising,song2021scorebased}, which significantly outperform classical generative models like GANs \citep{goodfellow14} or VAEs \citep{KW2013}.
A general aim of such algorithms \citep{arbel2021annealed,ho2020denoising,Wu2020} is to establish a path between input and target distribution, where "unseen" data points are established via the randomness of the input distribution.
Several combinations of such Langevin-type Markov chain Monte Carlo methods with other generative models were proposed in \citep{ben-hamu22,hagemann2023generalized, Wu2020}.
Gradient flows on measure spaces with respect to other metrics are considered in \citep{LFS2022,DWYZ2023,GWJDZ2020,L2017,liu2016stein} under the name Stein variational gradient descent.

For approximating gradient flows with respect to other functionals than the KL divergence, the authors of \citep{altekruger2023neural,ansari2021refining,ASM2022,BPKC2022,Fan22,pmlr-v97-gao19b,garcia2023optimization,heng2023deep,MKLGSB2021,peyre2015entropic} proposed the use of suitable forward and backward discretizations. 
To reduce the computational effort of evaluating distance measures on high-dimensional probability distributions, the sliced Wasserstein metric was introduced in \citep{RPDB2012}.
The main idea of the sliced Wasserstein distance is to compare one-dimensional projections of the corresponding probability distributions instead of the distributions themselves.
This approach can be generalized to more general probability metrics \citep{kolouri22} and was applied in the context of Wasserstein gradient flows in \citep{bonet2022efficient,liutkus19}.

For many generative gradient-flow methods it is required that the considered functional can be evaluated based on samples.
For divergence-based functionals like the Kullback-Leibler or the Jensen-Shannon divergence, a variational formulation leading to a GAN-like evaluation procedure is provided in \citep{Fan22}.
In contrast, the authors of \citep{altekruger2023neural,arbel19,GAG2021} use functionals based on the maximum mean discrepancy (MMD) which can be directly evaluated based on empirical measures.
For positive definite kernels, it can be shown under some additional assumptions that MMD defines a metric on the space of probability distributions, see e.g., \citep{GBRSS2012,SFL2011,SGFSL2010}.
If the considered kernel is smooth, then 
\citet{arbel19} proved that Wasserstein gradient flows can be fully described by particles.
Even though this is no longer true for non-smooth kernels \citep{hertrich2023wasserstein}, 
\cite{altekruger2023neural} pointed out that particle flows are Wasserstein gradient flows at least with respect to a restricted functional.
In particular, we can expect that particle flows provide an accurate approximation of Wasserstein gradient flows as long as the number of particles is large enough.

\begin{figure}[t]
\centering
\begin{minipage}{0.32\textwidth}
\includegraphics[width=0.82\textwidth]{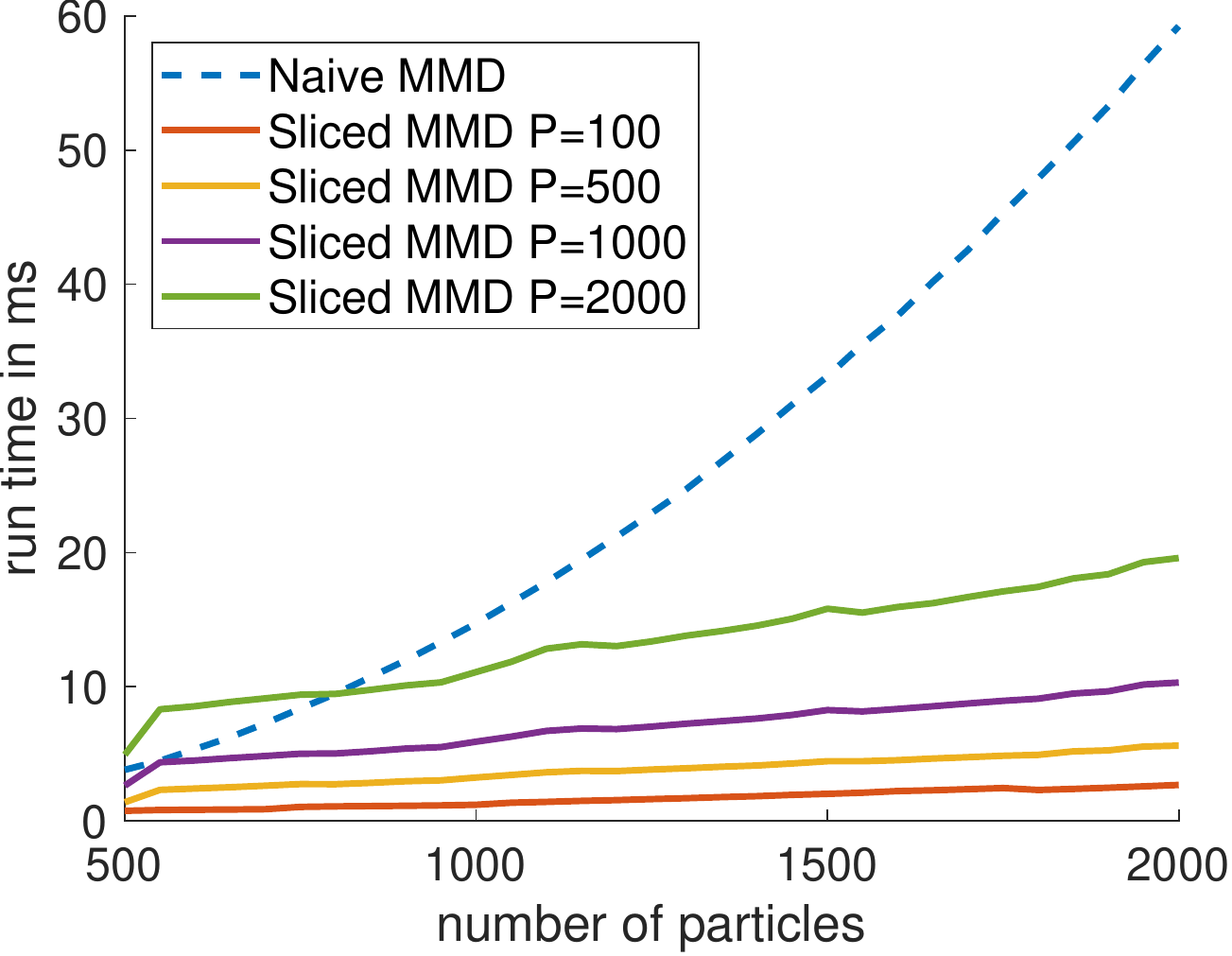}
\end{minipage}
\begin{minipage}{0.32\textwidth}
\includegraphics[width=0.85\textwidth]{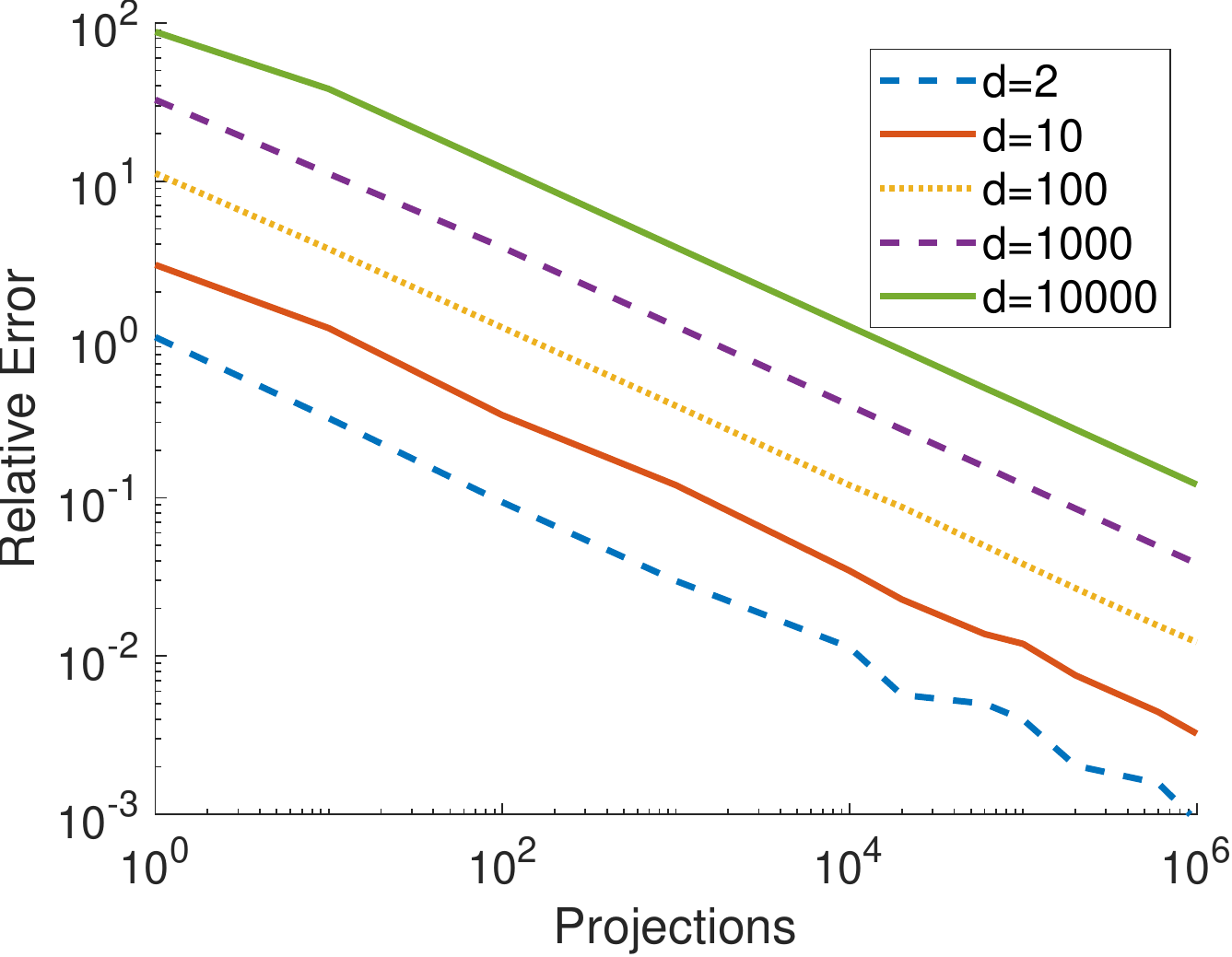}
\end{minipage}
\begin{minipage}{0.32\textwidth}
\includegraphics[width=0.85\textwidth]{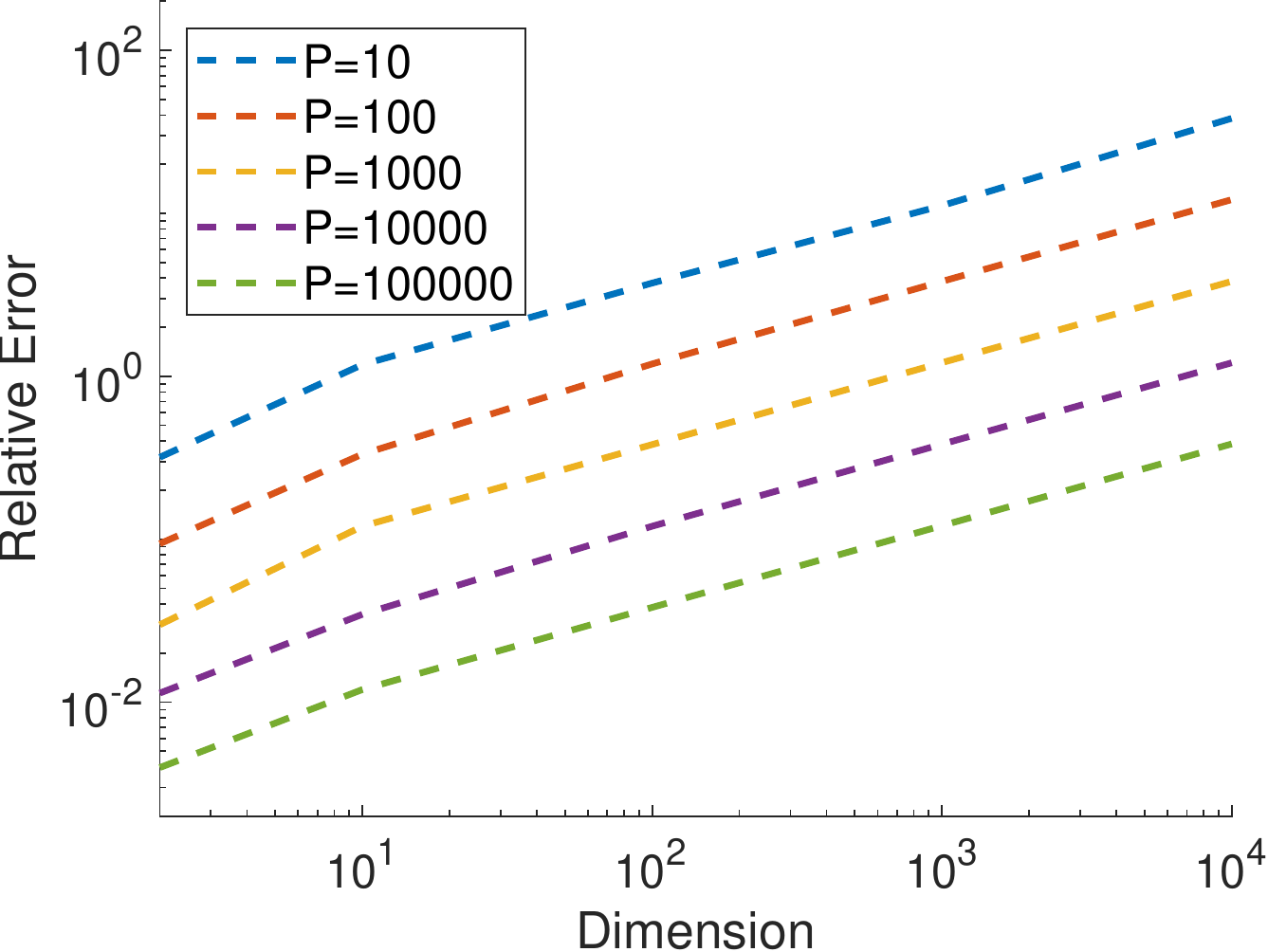}
\end{minipage}
\caption{Left: Comparison of run time for $1000$ gradient evaluations of naive MMD and sliced MMD with different number of projections $P$ in the case $d=100$. Middle and right: Relative error of the gradients of sliced MMD and MMD with respect to the number $P$ of projections and the dimension $d$. The results show the relative error behaves asymptotically as $O(\sqrt{d/P})$ as shown in Theorem~\ref{thm:convergence_rate}.}
    \label{fig:comparison_runtime}
\end{figure}

\textbf{Contributions.}
The computational complexity of MMD between two empirical measures with $N$ and $M$ support points depends quadratically on $N$ and $M$, which makes large scale computations impossible.
In this paper, we focus on the MMD with \emph{Riesz kernels}
\begin{equation}\label{eq:riesz}
K(x,y) = - \|x-y\|^r,\quad r \in (0,2),
\end{equation}
also known as energy distance \citep{mmd_energy_eq,szekely2002,szekely_energy}.
We show that Riesz kernels have the outstanding property that their MMD coincides with the sliced MMD of univariate Riesz kernels. It is this property that enables us to reduce the computation of (gradients of) MMD to the one-dimensional setting.
In the case of $r=1$, we propose a simple and computationally very efficient sorting algorithm for computing the gradient of the one-dimensional MMD  with complexity $O((M+N)\log(M+N))$. 
Considering that our numerical examples use between 10.000 and 50.000 particles, this leads to an incredible speed-up for gradient computations of (sliced) MMD as illustrated in the left plot of Figure~\ref{fig:comparison_runtime}.
Our approach opens the door to applications in image processing, where we have usually to cope with high dimensional data. 

In practice, sliced probability metrics are evaluated by replacing the expectation over all projections by the empirical expectation resulting in a finite sum. 
In the case of sliced MMD with Riesz kernels and $r=1$, we prove that the error induced by this approximation behaves asymptotically as $O(\sqrt{d/P})$, where $d$ is the data dimension and $P$ the number of projections, see the middle plot in Figure~\ref{fig:comparison_runtime} for an illustration.
The square root scaling of the error in the dimension $d$ ensures that an accurate computation of the sliced MMD with Riesz kernels is possible even in very high dimensions as demonstrated in the right plot in Figure~\ref{fig:comparison_runtime}. Taking into account the number of projections, the overall complexity of the computation of the derivatives of MMD is $O(dP(M+N)\log(M+N))$.

We apply the cheap evaluation of MMD gradients to compute MMD particle flows starting with samples from an initial probability measure $\mu_0$ to samples from a predefined target distribution $\nu$, which is given by samples.
Finally, we derive a generative model by training a sequence $(\Phi_l)_{l=1}^L$ of neural networks, where each $\Phi_l$ approximates a certain number of steps of the particle flow.
This allows us to train our network iteratively. 
In particular, during the training and evaluation procedure, we always consider only one of the networks $\Phi_l$ at the same time.
This allows an efficient training with relatively low resources even though all networks $\Phi_l$ together have a large number of parameters.
We demonstrate the efficiency of our generative sliced MMD flows for image generation on MNIST, FashionMNIST and CIFAR10.

\textbf{Related Work.}
Gradient flows with respect to MMD functionals are considered in \citep{altekruger2023neural,arbel19,hertrich2023wasserstein,kolouri22}. 
However, due to the quadratic complexity of the computation of the derivative of MMD functionals in the number of support points of the involved measures, these papers have a rather theoretical scope and applications are limited to measures supported on a few hundred points.
In order to reduce the dimension of the problem, \citet{kolouri22} consider a sliced version of MMD. 
This is motivated by the success of sliced Wasserstein distances \citep{RPDB2012}, which were used for deriving gradient flows in \citep{bonet2022efficient,liutkus19,nguyen2023hierarchical,nguyen2021distributional}.
In particular, \citet{kolouri22} observe that the sliced MMD is again a MMD functional \emph{with a different kernel}. 
We use this result in Section~\ref{sec:2}.
\citet{VG2023} bound Wasserstein distances and MMD against each other. However, they use strong assumptions on the kernel,
which are not fulfilled for the negative distance kernel.
In very low dimensions, fast evaluations of MMD and their gradients were proposed in \citep{GPS2012,TSGSW2011} based on fast Fourier summation using the non-equispaced fast Fourier transforms (NFFT), see \citep[Sec.~7]{PPST2018} and references therein.
Unfortunately, since the complexity of the NFFT depends exponentially on the data-dimension, these approaches are limited to applications in dimension four or smaller.
In a one-dimensional setting, the energy distance is related to the Cramer distance, see \citep{szekely2002}. In the context of reinforcement learning, \citet{lheritier2021cram} developed fast evaluation algorithms for the latter based on the calculation of cumulative distribution functions.

Finally, the authors of \citep{binkowski2018demystifying, DRG2015, li2017mmd,LSZ2015} apply MMD for generative modelling by constructing so-called MMD-GANs. However, this is conceptionally a very different approach since in MMD-GANs the discriminator in the classical GAN framework \citep{goodfellow14} is replaced by a MMD distance with a variable kernel. Also relevant is the direction of Sobolev-GANs \citep{mroueh2018sobolev} in which the discriminator is optimized in a Sobolev space, which is related to the RKHS of the Riesz kernel. Similar to GAN ideas this results in a max-min problem which is solved in an alternating fashion and is not related to gradient flows.

\textbf{Outline of the Paper.}
In Section~\ref{sec:2}, we prove that the sliced MMD with the one-dimensional Riesz kernel coincides with MMD of the scaled $d$-dimensional kernel. This can be used to establish an interesting lower bound on the MMD
by the Wasserstein-1 distance.
Then, in Section~\ref{sec3} we propose a sorting algorithm for computing the derivative of the sliced MMD in an efficient way. 
We apply the fast evaluation of MMD gradients to simulate MMD flows and to derive a generative model in Section~\ref{sec:4} .
Section~\ref{sec:5} shows numerical experiments on image generation. Finally, conlusions are drawn in Section~\ref{sec:6}. The appendices contain the proofs and  supplementary material.

%--------------------------------------------------------
\section{Sliced MMD for Riesz Kernels} \label{sec:2}
%--------------------------------------------------------
Let $\mathcal P(\R^d)$ denote the set of probability measures on $\mathbb R^d$ and
$\mathcal P_p(\R^d)$ its subset of measures
with finite $p$-th moment, i.e., $\int_{\R^d}\|x\|^p \d \mu(x)< \infty$.
Here $\|\cdot\|$ denotes the Euclidean norm on $\R^d$.
For a symmetric, positive definite kernel $K\colon\R^d\times\R^d\to\R$, 
the \emph{maximum mean discrepancy} (MMD) $\mathcal D_K \colon \mathcal P(\R^d) \times \mathcal P(\R^d) \to \mathbb R$
is the square root of 
$
\mathcal D_K^2(\mu,\nu)\coloneqq \mathcal E_K(\mu-\nu),
$
where $\mathcal E_K$ is the \emph{interaction energy}  of signed measures on $\mathbb R^d$ defined by
\begin{equation*}
\mathcal E_K(\eta) \coloneqq  \frac12 \int_{\R^d}\int_{\R^d} K(x,y)  \,\d \eta(x)\d \eta(y).
\end{equation*}
Due to its favorable properties, see Appendix~\ref{app:motivation_riesz}, we are interested in Riesz kernels
\begin{equation}
K(x,y) = - \|x-y\|^r,\quad r \in (0,2).
\end{equation}
These kernels are only conditionally positive definite, 
but can be  extended to positive definite kernels by $\tilde K (x,y) = K(x,y) - K(x,0) - K(0,y)$, see also Remark \ref{rem:riesz_BM}.
Then it holds for $\mu,\nu \in \mathcal P_r(\R^d)$ that
$\mathcal D_K(\mu,\nu) = \mathcal D_{\tilde K}(\mu,\nu)$, see \citep[Lemma 3.3 iii)]{NS2021}. 
Moreover, for Riesz kernels, $\mathcal D_K$ is a metric on $\mathcal P_r(\mathbb R^d)$, which is also known as
so-called energy distance \citep{mmd_energy_eq,szekely_energy}.
Note that we exclude the case $r=2$, since $\mathcal D_K$ is no longer a metric in this case.

However, computing MMDs on high dimensional spaces is computationally costly. 
Therefore, the \emph{sliced MMD} $\mathcal{SD}_{\mathrm k}^2: \mathcal P_2(\R^d) \times \mathcal P_2(\R^d) \to \mathbb R$
was considered in the literature, see e.g., \cite{kolouri22}.
For a symmetric 1D kernel ${\mathrm k}\colon\R\times\R\to\R$ it is given by
\begin{align*}
\mathcal{SD}_{\mathrm k}^2(\mu,\nu)
\coloneqq 
\E_{\xi \sim \mathcal U_{\Sp^{d-1}}}[\mathcal D_{\mathrm k}^2({P_\xi}_\#\mu,{P_\xi}_\#\nu)]
\end{align*}
with the push-forward measure ${P_\xi}_\# \mu \coloneqq \mu \circ P_\xi^{-1}$
of the projection 
$P_\xi(x) \coloneqq \langle \xi,x\rangle$
and the uniform distribution \smash{$\mathcal U_{\Sp^{d-1}}$} on the sphere \smash{$\mathbb S^{d-1}$}.
By interchanging the integrals from the expectation and the definition of MMD, 
\citet{kolouri22} observed
that the sliced MMD is equal to the MMD 
with an associate kernel $\mathrm{K}\colon\R^d\times\R^d\to\R$. 
More precisely, it holds
\begin{equation} \label{appr_kernel}
\mathcal{SD}_{\mathrm k}^2(\mu,\nu)
=
\mathcal D_{\mathrm{K}}^2(\mu,\nu),
\quad  \text{with} \quad \mathrm{K}(x,y)\coloneqq \E_{\xi\sim \mathcal U_{\Sp^{d-1}}}[{\mathrm k}(P_\xi(x),P_\xi(y))].
\end{equation}
By the following theorem, this relation becomes more simple when dealing with Riesz kernels, 
since in this case the associate kernel is a Riesz kernel as well.

\begin{theorem}[Sliced Riesz Kernels are Riesz Kernels]\label{sliced:unsliced}
Let ${\mathrm k}(x,y) \coloneqq -|x-y|^r$, $r\in(0,2)$.
Then, for $\mu, \nu \in \mathcal{P}_r(\R^d)$, it holds 
$
\mathcal{SD}_{\mathrm k}^2(\mu,\nu)
=
\mathcal D_{\mathrm{K}}^2(\mu,\nu)
$
with the associated scaled Riesz kernel 
\begin{align*}
 {\mathrm K} (x,y)  \coloneqq -c_{d,r}^{-1}\|x-y\|^r, \quad 
c_{d,r} 
\coloneqq \frac{\sqrt{\pi}\Gamma(\frac{d+r}{2})}{\Gamma(\frac{d}{2})\Gamma(\frac{r+1}{2})}.
\end{align*}
\end{theorem}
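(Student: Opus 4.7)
The plan is to use the already-stated identity \eqref{appr_kernel} to reduce the claim to computing a single expectation on the sphere. Starting from
\[
\mathrm{K}(x,y) = \E_{\xi \sim \mathcal U_{\Sp^{d-1}}}\bigl[\mathrm k(\langle\xi,x\rangle,\langle\xi,y\rangle)\bigr] = -\E_{\xi \sim \mathcal U_{\Sp^{d-1}}}\bigl[|\langle \xi,x-y\rangle|^r\bigr],
\]
I would set $z = x-y$ and exploit the rotation invariance of the uniform measure $\mathcal U_{\Sp^{d-1}}$: picking any rotation $R$ with $R z = \|z\| e_1$ and substituting $\xi \mapsto R^\top \xi$ shows that the integral only depends on $\|z\|$, giving
\[
\mathrm K(x,y) = -\|x-y\|^r \, \E_{\xi \sim \mathcal U_{\Sp^{d-1}}}\bigl[|\xi_1|^r\bigr].
\]
It therefore suffices to show that $\E[|\xi_1|^r] = c_{d,r}^{-1}$.

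Next I would compute this moment by using the known marginal density of a coordinate of a uniform point on $\Sp^{d-1}$ (for $d\geq 2$),
\[
f(t) = \frac{\Gamma(d/2)}{\sqrt{\pi}\,\Gamma((d-1)/2)}(1-t^2)^{(d-3)/2}, \qquad t \in [-1,1],
\]
which follows from writing $\xi = (t,\sqrt{1-t^2}\,\omega)$ with $\omega \in \Sp^{d-2}$ and comparing surface measures (or equivalently via the Funk--Hecke formula). Then by symmetry
\[
\E[|\xi_1|^r] = \frac{2\,\Gamma(d/2)}{\sqrt{\pi}\,\Gamma((d-1)/2)}\int_0^1 t^r (1-t^2)^{(d-3)/2}\,\dx t.
\]
The substitution $u = t^2$ turns this into a Beta integral:
\[
\int_0^1 t^r (1-t^2)^{(d-3)/2}\,\dx t = \tfrac12 \Beta\!\left(\tfrac{r+1}{2},\tfrac{d-1}{2}\right) = \frac{\Gamma(\frac{r+1}{2})\Gamma(\frac{d-1}{2})}{2\,\Gamma(\frac{d+r}{2})}.
\]
Substituting back, the $\Gamma((d-1)/2)$ factors cancel and I obtain
\[
\E[|\xi_1|^r] = \frac{\Gamma(d/2)\,\Gamma(\frac{r+1}{2})}{\sqrt{\pi}\,\Gamma(\frac{d+r}{2})} = c_{d,r}^{-1},
\]
as required. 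The case $d=1$ is trivial since then $\mathcal U_{\Sp^0}$ is concentrated on $\pm 1$, $\E[|\xi_1|^r]=1$, and one checks $c_{1,r}=1$ using $\Gamma(1/2)=\sqrt{\pi}$.

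There is no genuine obstacle here; the argument is a direct computation once one realises that rotation invariance reduces everything to a one-dimensional integral. The only subtle point is remembering (or rederiving) the sphere-marginal density, which could alternatively be bypassed by writing the surface-area integral in polar-like coordinates $\xi = (\cos\theta, \sin\theta\,\omega)$ with $\omega \in \Sp^{d-2}$, in which case the sphere-area ratio $|\Sp^{d-2}|/|\Sp^{d-1}|$ introduces exactly the constants that combine with the $\int_0^{\pi/2}(\cos\theta)^r(\sin\theta)^{d-2}\,\dx\theta$ integral to produce $c_{d,r}^{-1}$. Either route identifies $\mathrm K$ with the claimed scaled Riesz kernel, and the identity $\mathcal{SD}^2_{\mathrm k}(\mu,\nu) = \mathcal D_{\mathrm K}^2(\mu,\nu)$ for $\mu,\nu \in \mathcal P_r(\R^d)$ follows from \eqref{appr_kernel} together with Fubini, whose applicability is guaranteed by the $r$-th moment assumption.
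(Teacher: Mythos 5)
Your proposal is correct and follows essentially the same route as the paper's proof: rotation invariance reduces the associated kernel to $-\|x-y\|^r\,\E[|\xi_1|^r]$, and the moment $\E[|\xi_1|^r]$ is evaluated as a Beta integral via the decomposition $\xi=(t,\sqrt{1-t^2}\,\omega)$, which is exactly the marginal-density computation you describe. The constants work out identically, so there is nothing further to add.
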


The proof is given in Appendix~\ref{proof:sliced:unsliced}.
The constant $c_{d,r}$ depends asymptotically with $O(d^{r/2})$ on the dimension. In particular, it should be ``harder'' to estimate the MMD or its gradients in higher dimensions via slicing. We will discuss this issue more formally later in Remark~\ref{rem:complexity_gradient_evaluation}.
For $r=1$, we just write $c_d\coloneqq c_{d,1}$ and can consider measures in $\mathcal P_1(\R^d)$.
Interestingly, based on Theorem \ref{sliced:unsliced}, 
we can establish a relation between the MMD and
the Wasserstein-1 distance on $\mathcal P_1 (\R^d)$ defined by
\[
\mathcal W_1(\mu,\nu) \coloneqq \min_{\pi \in \Pi(\mu,\nu)} \int \|x-y\| \, \d \pi(x,y),
\]
where $\Pi(\mu,\nu)$ denotes the set of measures in $\mathcal P_1(\R^d \times \R^d)$ with marginals $\mu$ and $\nu$. 
This also shows that Conjecture 1 in \citep{MD2023} can only hold for
non-compactly supported measures.
The proof is given in Appendix \ref{proof:rel}.

\begin{theorem}[Relation between $\mathcal D_K$ and $\mathcal W_1$ for Distance Kernels]\label{thm:rel}
Let $K(x,y) \coloneqq - \|x-y\|$.
Then, it holds for $\mu, \nu \in \mathcal P_1(\mathbb{R}^d)$ that
$
    2 \mathcal{D}_{K}^2(\mu, \nu) 
		\leq  
		 \mathcal{W}_{1}(\mu, \nu). 
$
If $\mu$ and $\nu$ are additionally supported on the ball $B_R(0)$, then
there exists a constant $C_d>0$ such that
$
    		 \mathcal{W}_{1}(\mu, \nu) 
		\leq  
		C_d R^{\frac{2d+1}{2d+2}} 
		\mathcal{D}_{K}(\mu, \nu)^{\frac{1}{d+1}} .
$
\end{theorem}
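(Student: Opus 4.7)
The plan is to reduce both inequalities to one-dimensional estimates via Theorem~\ref{sliced:unsliced}. The key 1D identity $\mathcal{D}_{\mathrm{k}}^2(\alpha,\beta) = \|F_\alpha - F_\beta\|_{L^2(\mathbb{R})}^2$ (where $\mathrm{k}(s,t)=-|s-t|$ and $F_\alpha$ denotes the CDF of $\alpha$) is obtained by plugging $|s-t| = \int_{\mathbb{R}} (\mathbf{1}_{s\leq u} - \mathbf{1}_{t\leq u})^2\,\mathrm{d}u$ into the definition and using $(\alpha-\beta)(\mathbb{R})=0$. Together with $\mathcal{W}_1(\alpha,\beta) = \|F_\alpha - F_\beta\|_{L^1(\mathbb{R})}$ in 1D, the comparison of $\mathcal{D}_K$ and $\mathcal{W}_1$ reduces to comparing $L^1$ and $L^2$ norms of $F_\alpha - F_\beta$, averaged over slices.

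For the upper bound, Theorem~\ref{sliced:unsliced} with the scaling $K = c_d\,\mathrm{K}$ gives $\mathcal{D}_K^2(\mu,\nu) = c_d\,\mathbb{E}_{\xi \sim \mathcal{U}_{\mathbb{S}^{d-1}}}[\|F_{\xi,\mu} - F_{\xi,\nu}\|_{L^2(\mathbb{R})}^2]$, where $F_{\xi,\mu}$ is the CDF of $(P_\xi)_\#\mu$. Since $|F_{\xi,\mu}-F_{\xi,\nu}| \leq 1$ pointwise, the bound $(F_{\xi,\mu}-F_{\xi,\nu})^2 \leq |F_{\xi,\mu}-F_{\xi,\nu}|$ integrates to $\mathcal{D}_{\mathrm{k}}^2((P_\xi)_\#\mu,(P_\xi)_\#\nu) \leq \mathcal{W}_1((P_\xi)_\#\mu,(P_\xi)_\#\nu)$, and averaging yields $\mathcal{D}_K^2(\mu,\nu) \leq c_d\,\mathbb{E}_\xi\mathcal{W}_1((P_\xi)_\#\mu,(P_\xi)_\#\nu)$. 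The remaining inequality $c_d\,\mathbb{E}_\xi\mathcal{W}_1((P_\xi)_\#\mu,(P_\xi)_\#\nu) \leq \mathcal{W}_1(\mu,\nu)$ follows by picking any $\pi \in \Pi(\mu,\nu)$, bounding each 1D transport cost by $\int|\langle\xi,x-y\rangle|\,\mathrm{d}\pi$, and applying Fubini together with the point-mass form of Theorem~\ref{sliced:unsliced}, namely $\mathbb{E}_\xi|\langle\xi,v\rangle| = c_d^{-1}\|v\|$, before taking the infimum over $\pi$.

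For the lower bound, supports in $B_R(0)$ imply projections supported on $[-R,R]$, so $F_{\xi,\mu}-F_{\xi,\nu}$ lives on an interval of length $2R$ and Cauchy--Schwarz gives $\mathcal{W}_1((P_\xi)_\#\mu,(P_\xi)_\#\nu) \leq \sqrt{2R}\,\mathcal{D}_{\mathrm{k}}((P_\xi)_\#\mu,(P_\xi)_\#\nu)$. Averaging over $\xi$ and applying Jensen to the concave square root produces $\mathbb{E}_\xi\mathcal{W}_1((P_\xi)_\#\mu,(P_\xi)_\#\nu) \leq \sqrt{2R/c_d}\,\mathcal{D}_K(\mu,\nu)$. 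The final step upgrades sliced to full Wasserstein-1 via a Bonnotte-type inequality $\mathcal{W}_1(\mu,\nu) \leq C_d\,R^{d/(d+1)}\,\mathbb{E}_\xi\mathcal{W}_1((P_\xi)_\#\mu,(P_\xi)_\#\nu)^{1/(d+1)}$ for measures in $B_R(0)$; substituting and collecting exponents, $\frac{d}{d+1} + \frac{1}{2(d+1)} = \frac{2d+1}{2d+2}$, gives the stated inequality. The Bonnotte-type bound is the main obstacle, since it is where the dimensional exponent $1/(d+1)$ enters and is the only non-elementary ingredient; the cleanest route is to invoke a $\mathcal{W}_1$ version of Bonnotte's classical $\mathcal{W}_2$ result directly, but it can also be established by an interpolation argument between Wasserstein distance and a negative Sobolev (equivalently, RKHS) norm on $B_R(0)$.
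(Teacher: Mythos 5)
Your second inequality is essentially the paper's own proof: the same chain $\mathcal W_1 \le c_d R^{d/(d+1)}(\E_\xi[\mathcal W_1(P_{\xi\#}\mu,P_{\xi\#}\nu)])^{1/(d+1)}$ from Bonnotte's lemma, followed by the Cramer identity $\mathcal W_1 = \ell_1$ on the line, Cauchy--Schwarz on $[-R,R]$ to pass from $\ell_1$ to $\ell_2=\mathcal D_{\mathrm k}$, Jensen over the sphere, and Theorem~\ref{sliced:unsliced} to reassemble $\mathcal D_K$; your exponent bookkeeping $\tfrac{d}{d+1}+\tfrac{1}{2(d+1)}=\tfrac{2d+1}{2d+2}$ matches. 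Your first inequality, however, takes a genuinely different route. The paper works in the RKHS of the extended kernel $\tilde K$ and uses the coupling bound $|\int f\,\d(\mu-\nu)|\le\int\|\tilde K(x,\cdot)-\tilde K(y,\cdot)\|_{\mathcal H_{\tilde K}}\d\pi$ together with $\|\tilde K(x,\cdot)-\tilde K(y,\cdot)\|^2_{\mathcal H_{\tilde K}}=2\|x-y\|$ and Jensen. You instead slice, use the pointwise bound $(F_{\xi,\mu}-F_{\xi,\nu})^2\le|F_{\xi,\mu}-F_{\xi,\nu}|$, and close with $\E_{\xi}[|\langle\xi,v\rangle|]=c_d^{-1}\|v\|$ and Fubini over an arbitrary coupling. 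Both mechanisms are sound, and yours is arguably more elementary since it avoids RKHS duality entirely; each step you state (the CDF identity, $\mathcal W_1=\ell_1$ in 1D, the sliced-to-full comparison) is justified for $\mathcal P_1$ measures.

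There is one discrepancy you should be aware of, and it is not a flaw in your argument: your route delivers $\mathcal D_K^2(\mu,\nu)\le\mathcal W_1(\mu,\nu)$, \emph{without} the factor $2$ in the statement. You cannot recover that factor, because the stated inequality $2\mathcal D_K^2\le\mathcal W_1$ is in fact false under the paper's normalization $\mathcal D_K^2(\mu,\nu)=\tfrac12\int\int K\,\d(\mu-\nu)\,\d(\mu-\nu)$: for $\mu=\delta_0$, $\nu=\delta_1$ in $\mathcal P_1(\R)$ one computes $\mathcal D_K^2=1=\mathcal W_1$, so $2\mathcal D_K^2=2>\mathcal W_1$. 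The paper's factor of $2$ enters through the duality identity written as $2\mathcal D_K(\mu,\nu)=\sup_{\|f\|_{\mathcal H_{\tilde K}}\le1}\int f\,\d(\mu-\nu)$, whereas with the $\tfrac12$ in the definition of $\mathcal E_K$ the supremum equals $\sqrt2\,\mathcal D_K(\mu,\nu)$; correcting this, the paper's own argument also yields exactly $\mathcal D_K^2\le\mathcal W_1$, which the point-mass example shows is sharp. So your proof establishes the correct (and best possible) form of the first inequality, and the mismatch with the stated constant is attributable to the statement, not to your derivation.
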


The fact that the sample complexities of MMD and Wasserstein-1 are $O(n^{-1/2})$ \cite[Chapter 4.1]{GBRSS2012} and $O(n^{-1/d})$ \cite[Chapter 8.4.1]{peyre2020computational} suggests, that the exponent of $\mathcal D_K$ in Theorem~\ref{thm:rel} cannot be improved over $1/d$.

%-------------------------------------------------
\section{Gradients of Sliced MMD} \label{sec3}
%-------------------------------------------------
Next, we consider the functional
$\F_\nu\colon\P(\R^d)\to\R$ given by
\begin{equation}\label{eq:MMD_ojective_fun}
\F_\nu (\mu) \coloneqq \mathcal E_K(\mu) +\mathcal V_{K,\nu}(\mu) 
= \mathcal D_K^2(\mu,\nu) +\text{const}_{\nu}, 
\end{equation}
where $\V_{K, \nu}(\mu)$ is the so-called \emph{potential energy} 
\begin{align}\label{eq:potential}
    \V_{K, \nu}(\mu) \coloneqq - \int_{\R^d} \int_{\R^d} K(x,y) \,  \d\nu(y) \, \d \mu(x)    
\end{align}
acting  as an attraction term between the masses of $\mu$ and $\nu$, while the 
interaction energy $\mathcal E_K$ is a repulsion term enforcing a proper spread of $\mu$.
For the rest of the paper, we always consider the negative distance kernel $K(x,y):=-\|x-y\|$, which is the Riesz kernel \eqref{eq:riesz} with $r=1$.
Then, we obtain directly from the metric property of MMD that the minimizer of the non-convex functional $\F_\nu$ is given by $\nu$.
We are interested in computing gradient flows of $\F_\nu$ towards this minimizer.
However, the computation of gradients in measure spaces for discrepancy functionals with non-smooth kernels is highly non-trivial and computationally costly, see e.g., \citep{altekruger2023neural,CDEFS2020,hertrich2023wasserstein}.

As a remedy, we focus on a discrete form of the $d$-dimensional MMD.
More precisely, we assume that $\mu$ and $\nu$ are empirical measures, i.e., they are of the form
$\mu=\frac{1}{N}\sum_{i=1}^N\delta_{x_i}$ 
and 
$\nu= \frac{1}{M}\sum_{j=1}^M\delta_{y_j}$  
for some $x_j,y_j \in \R^d$.
Let
$\zb {x} \coloneqq (x_1,\ldots,x_N)$ and 
$\zb y \coloneqq (y_1,\ldots,y_M)$.
Then the functional $\F_\nu$ reduces to the function 
$F_d(\cdot|\zb y)\colon\R^{Nd}\to\R$ given by
\begin{align}
F_d(\zb x|\zb y)&=-\frac{1}{2 N^2}\sum_{i=1}^N\sum_{j=1}^N\|x_i-x_j\|+\frac{1}{MN}\sum_{i=1}^N\sum_{j=1}^M \|x_i-y_j\|\label{eq:discrete_MMD}\\
&=\mathcal D_{K}^2\Big(\frac1N\sum_{i=1}^N\delta_{x_i},\frac1M\sum_{j=1}^M \delta_{y_j}\Big)+\mathrm{const}_{\zb y}.
\end{align}
In order to evaluate the gradient of $F_d$ with respect to the support points $\zb x$, we use Theorem~\ref{sliced:unsliced} to rewrite $F_d(\zb x|\zb y)$ as
\begin{align}\label{eq:discrete_MMD_1D}
F_d(\zb x|\zb y)=c_d\E_{\xi \sim\mathcal U_{\Sp^{d-1}}}[F_1(\langle \xi,x_1\rangle,...,\langle\xi,x_N\rangle|\langle \xi,y_1\rangle,...,\langle \xi,y_M\rangle)].
\end{align}
Then, the gradient of $F_d$ with respect to $x_i$ is given by
\begin{equation}\label{eq:derivative_MMD}
\nabla_{x_i}F_d(\zb x|\zb y)=c_d\E_{\xi \sim \mathcal U_{\Sp^{d-1}}}[\partial_i F_1(\langle \xi,x_1\rangle,...,\langle\xi,x_N\rangle|\langle \xi,y_1\rangle,...,\langle \xi,y_M\rangle)\xi],
\end{equation}
where $\partial_i F_1$ denotes the derivative of $F_1$ with respect to the $i$-th component of the input.
Consequently, it suffices to compute gradients of $F_1$ in order to evaluate the gradient of $F_d$.

\textbf{A Sorting Algorithm for the 1D-Case.}
Next, we derive a sorting algorithm to compute the gradient of $F_1$ efficiently. 
In particular, the proposed algorithm has complexity $O((M+N)\log(M+N))$ even though the definition of $F_1$ in \eqref{eq:discrete_MMD} involves $N^2+MN$ summands.

To this end, we split the functional $F_1$ into interaction and potential energy, i.e., $F_1(\zb x|\zb y)=E(\zb x)+V(\zb x|\zb y)$ with
\begin{equation}\label{eq:discrete_interaction_potential}
E(\zb x):=-\frac{1}{2 N^2}\sum_{i=1}^N\sum_{j=1}^N|x_i-x_j|,\quad 
V(\zb x|\zb y):=\frac{1}{NM}\sum_{i=1}^N\sum_{j=1}^M |x_i-y_j|.
\end{equation}
Then, we can compute the derivatives of $E$ and $V$ by the following theorem  which proof is given in Appendix~\ref{proof:sorting}.
%------------------------------
\begin{theorem}[Derivatives of Interaction and Potential Energy]\label{thm:sorting}
Let $x_1,...,x_N\in\R$ be pairwise disjoint 
and $y_1,...,y_M\in\R$ such that $x_i\neq y_j$ for all $i=1,...,N$ and $j=1,...,M$.
Then, $E$ and $V$ are differentiable with
\[
\nabla_{x_i} E(\zb x)= \frac{N + 1 -2\sigma^{-1}(i)}{N^2},\quad \nabla_{x_i} V(\zb x|\zb y)=\frac{2\,\#\{j\in\{1,...,M\}:y_j<x_i\}-M}{MN},
\]
where $\sigma\colon\{1,...,N\}\to \{1,...,N\}$ is the permutation with $x_{\sigma(1)}< ... < x_{\sigma(N)}$.
\end{theorem}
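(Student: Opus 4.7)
The plan is to differentiate $E$ and $V$ termwise, using that under the given genericity assumptions every argument $x_i-x_j$ (with $i\neq j$) and $x_i-y_j$ is nonzero, so that $t\mapsto|t|$ is smooth at these points with derivative $\mathrm{sgn}(t)$. Differentiability of $E$ and $V$ on the open set cut out by these inequalities is then immediate, and the remaining work is purely the algebraic bookkeeping of signs and ranks.

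For $V$, the chain rule gives $\nabla_{x_i}V(\zb x|\zb y)=\tfrac{1}{NM}\sum_{j=1}^M\mathrm{sgn}(x_i-y_j)$. Since no $y_j$ equals $x_i$, I would split $\mathrm{sgn}(x_i-y_j)=\mathbf{1}[y_j<x_i]-\mathbf{1}[y_j>x_i]$, and use $\#\{j:y_j<x_i\}+\#\{j:y_j>x_i\}=M$ to rewrite the sign sum as $2\,\#\{j:y_j<x_i\}-M$, producing the claimed formula. For $E$, the variable $x_i$ appears both as the first and the second argument in the double sum, which contributes a factor of two after collecting (the diagonal term $j=i$ contributes nothing). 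This yields $\nabla_{x_i}E(\zb x)=-\tfrac{1}{N^2}\sum_{j\neq i}\mathrm{sgn}(x_i-x_j)$. Pairwise disjointness of the $x_k$ lets me replace this by $\#\{j\neq i:x_j<x_i\}-\#\{j\neq i:x_j>x_i\}$, and by definition of $\sigma$ the rank of $x_i$ is $\sigma^{-1}(i)$, so the lower count is $\sigma^{-1}(i)-1$ and the upper count is $N-\sigma^{-1}(i)$. Subtracting and multiplying by $-1/N^2$ gives exactly $(N+1-2\sigma^{-1}(i))/N^2$.

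No step is technically deep; the only subtleties to watch are the factor of $2$ from the symmetric double sum defining $E$, the fact that the diagonal $j=i$ term $|x_i-x_i|$ is identically zero (so contributes no smoothness issue and no derivative), and the off-by-one when converting $\#\{j\neq i:x_j<x_i\}$ into the rank $\sigma^{-1}(i)$. The sign rewriting in both parts crucially uses the hypothesis that the relevant arguments are nonzero, matching precisely the genericity assumption of the theorem.
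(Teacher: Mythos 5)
Your proposal is correct and takes essentially the same route as the paper: the treatment of $V$ via termwise differentiation and sign counting is identical, and for $E$ your direct term-by-term differentiation followed by rank counting is just a reshuffling of the paper's argument, which instead reorders the double sum to exhibit $E$ locally as the linear function $\sum_{i=1}^N \frac{N+1-2\sigma^{-1}(i)}{N^2}\,x_i$ before reading off the gradient. The subtleties you flag (the factor of $2$ from the symmetric double sum, the vanishing diagonal term, and the off-by-one when converting $\#\{j\neq i: x_j<x_i\}$ to $\sigma^{-1}(i)-1$) are all handled correctly.
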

%---------------------------------
Since $V$ is convex, we can show with the same proof that 
$$
\frac{2\,\#\{j\in\{1,...,M\}:y_j<x_i\}-M}{MN}\in \partial_{x_i} V(\zb x|\zb y),
$$
where $\partial_{x_i}$ is the subdifferential ov $V$ with respect to $x_i$ whenever $x_i=y_j$ for some $i,j$.
By Theorem \ref{thm:sorting}, we obtain that 
$\nabla F_1(\zb x|\zb y) = \nabla E(\zb x)+\nabla V(\zb x|\zb v)$ 
can be computed by
Algorithm~\ref{alg:interaction_sort} and Algorithm~\ref{alg:potential_sort}
with
complexity $O(N\log(N))$ and  $O((M+N)\log(M+N))$, respectively.
The complexity is dominated by the sorting procedure.
Both algorithms can be implemented in a vectorized form for computational efficiency.
Note that by Lemma~\ref{l2+D} from the appendix, the discrepancy with Riesz kernel and $r=1$ can be represented by the cumulative distribution functions (cdfs) of the involved measures. Since the cdf of an one-dimensional empirical measure can be computed via sorting, we also obtain an $O((N+M)\log(M+N))$ algorithm for computing the one-dimensional MMD itself and not only for its derivative.

\begin{figure}[t]
\begin{algorithm}[H]
\begin{algorithmic}
\State \textbf{Input:} $x_1,...,x_N\in \R$ with $x_i\neq x_j$ for $i\neq j$.
\State \textbf{Algorithm:}
\State Compute $\sigma_1,...\sigma_N=\mathrm{argsort}(x_1,...,x_N)$.
\State Compute $g_i=-\frac{2\sigma_i^{-1}-1-N}{N^2}$.
\State \textbf{Output:} $(g_1,...,g_N)=\nabla E(x_1,...,x_N)$.
\end{algorithmic}
\caption{Derivative of the interaction energy $E$ from \eqref{eq:discrete_interaction_potential}.}
\label{alg:interaction_sort}
\end{algorithm}
\begin{algorithm}[H]
\begin{algorithmic}
\State \textbf{Input:} $x_1,...,x_N\in\R$, $y_1,...,y_M\in\R$ with $x_i\neq y_j$.
\State \textbf{Algorithm:}
\State Compute $\sigma_1,...,\sigma_{N+M}=\mathrm{argsort}(x_1,...,x_N,y_1,...,y_M)$
\State Initialize $\tilde h_1=\cdots=\tilde h_{M+N}=0$.
\For{$j=1,...,M$}
\State Set $\tilde h_{\sigma(N+j)}=1$.
\EndFor
\State Set $h=2\,\mathrm{cumsum}(\tilde h)-1$
\For{$i=1,...,N$}
\State Set $g_i=\frac{h_{\sigma^{-1}(i)}}{MN}$,
\EndFor
\State \textbf{Output:} $(g_1,...,g_N)=\nabla V(x_1,...,x_N|y_1,\ldots,y_M)$.
\end{algorithmic}
\caption{Derivative of the potential energy $V$ from \eqref{eq:discrete_interaction_potential}.}
\label{alg:potential_sort}
\end{algorithm}
\end{figure}

\textbf{Stochastic Approximation of Sliced MMD Gradients for $r=1$.}
To evaluate the gradient of $F_d$ efficiently, we use a stochastic gradient estimator.
For $x_1,...,x_N,y_1,...,y_M\in\R^d$, we define for $P\in\N$ the stochastic gradient estimator of \eqref{eq:derivative_MMD}
as the random variable 
\begin{equation}\label{eq:def_grad_est1}
\tilde\nabla_P F_d(\zb x|\zb y)= \left(\tilde\nabla_{P,x_i}F_d(\zb x|\zb y) \right)_{i=1}^N
\end{equation}
where
\begin{equation}\label{eq:def_grad_est2}
\tilde \nabla_{P,x_i}F_d(\zb x|\zb y)
\coloneqq
\frac{c_d}{P}\sum_{p=1}^P\partial_i F_1(\langle \xi_p,x_1\rangle,...,\langle\xi_p,x_N\rangle|\langle \xi_p,y_1\rangle,...,\langle \xi_p,y_M\rangle)\xi_p,
\end{equation}
for independent random variables $\xi_1,...,\xi_P\sim\mathcal U_{\Sp^{d-1}}$.
We obtain  by \eqref{eq:derivative_MMD} that $\tilde \nabla F_d$ is unbiased, i.e.,
$
\E[\tilde\nabla_P F_d(\zb x|\zb y)]=\nabla F_d(\zb x|\zb y).
$
Moreover, the following theorem shows that the error of $\tilde \nabla_P F_d$ converges to zero for a growing number $P$ of projections. The proof uses classical concentration inequalities and follows directly from Corollary~\ref{cor:concentration} in Appendix~\ref{proof:concentration}.
%---------------------------
\begin{theorem}[Error Bound for Stochastic MMD Gradients]\label{thm:convergence_rate}
Let $x_1,...,x_N,y_1,...,y_M\in \R^d$. Then, it holds 
\[
\E[\|\tilde\nabla_P F_d(\zb x|\zb y)-\nabla F_d(\zb x|\zb y)\|]\in O\Big(\sqrt{\frac{d}{P}}\Big).
\]
\end{theorem}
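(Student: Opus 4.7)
The plan is to treat the error $\tilde\nabla_P F_d(\zb x|\zb y) - \nabla F_d(\zb x|\zb y)$ componentwise as the empirical mean of $P$ i.i.d., mean-zero, bounded random vectors in $\R^d$, and then apply a standard second-moment estimate followed by Jensen's inequality. Concretely, for each $i\in\{1,\ldots,N\}$ I would set
\[
W_p^{(i)} \coloneqq c_d\,\partial_i F_1(\langle\xi_p,x_1\rangle,\ldots|\langle\xi_p,y_1\rangle,\ldots)\,\xi_p - \nabla_{x_i} F_d(\zb x|\zb y),
\]
so that, by \eqref{eq:derivative_MMD}, the $(W_p^{(i)})_{p=1}^P$ are i.i.d. with zero mean, and the $i$-th block of the error equals $\tfrac{1}{P}\sum_{p=1}^P W_p^{(i)}$.

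Next I would bound the summands uniformly. Theorem~\ref{thm:sorting} yields $|\partial_i F_1| \le \tfrac{N-1}{N^2}+\tfrac{1}{N} \le \tfrac{2}{N}$ wherever $F_1$ is differentiable, and since $\xi_p$ lies on the unit sphere, this gives $\|W_p^{(i)}\| \le 4c_d/N$ almost surely. Using the variance identity for i.i.d. sums together with $\E\|W_1^{(i)}\|^2\le 16c_d^2/N^2$, I obtain
\[
\E\Big[\Big\|\tfrac{1}{P}\sum_{p=1}^P W_p^{(i)}\Big\|^2\Big] \le \frac{16\,c_d^2}{N^2 P}.
\]
Summing this bound over $i=1,\ldots,N$ and then applying Jensen's inequality to pass from $\E\|\cdot\|^2$ to $\E\|\cdot\|$ yields
\[
\E\bigl[\|\tilde\nabla_P F_d(\zb x|\zb y) - \nabla F_d(\zb x|\zb y)\|\bigr] \le \sqrt{\frac{16\,c_d^2}{N P}} = \frac{4 c_d}{\sqrt{N P}}.
\]
Finally, I would invoke the asymptotic $c_d = c_{d,1}\in O(d^{1/2})$ already recorded after Theorem~\ref{sliced:unsliced} (which follows from Gamma-function asymptotics) to conclude the claimed rate $O(\sqrt{d/P})$. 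Alternatively, one can arrive at the same estimate via the vector-valued concentration result cited in Corollary~\ref{cor:concentration}, which is the route the authors indicate.

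The main obstacle is pinning down the correct dependence on the ambient dimension $d$. A naive triangle-inequality argument would let each summand inherit a factor $c_d\in O(\sqrt d)$ and, after stacking $N$ coordinates, could lead to an extra $\sqrt N$ or even linear-in-$d$ blow-up. The saving comes from two facts that must be combined carefully: (i) the sharp $O(1/N)$ supremum bound on $|\partial_i F_1|$ from Theorem~\ref{thm:sorting}, which is what makes the summands small in spite of the $c_d$ prefactor, and (ii) aggregating the $N$ coordinate blocks in the second moment rather than the first, so that $N$ zero-mean errors contribute only $\sqrt{N}$. Together with $c_d\in O(\sqrt d)$ these precisely produce the rate $O(\sqrt{d/P})$, with constants depending on the fixed particle counts $M,N$ but not on $d$ or $P$.
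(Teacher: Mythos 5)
Your argument is correct, and it reaches the stated rate by a genuinely different route than the paper. The paper proves an intermediate exponential concentration bound (Theorem~\ref{thm:concentration_bound}) by stacking all $N$ blocks into a single $dN$-dimensional vector, bounding $\|\tilde X_p\|\le 2c_d+2$ via the triangle inequality over blocks, applying the vector-valued Bernstein inequality of \citet{kohler2017sub}, and then integrating the resulting Gaussian-type tail to obtain $\E[\|\cdot\|]\lesssim (c_d+1)/\sqrt{P}$; the dimension dependence enters at the very end through Kershaw's bound on the Gamma ratio in $c_d$. You instead compute the second moment of the i.i.d.\ empirical mean directly, block by block, using $\E\bigl[\|\tfrac1P\sum_p W_p^{(i)}\|^2\bigr]=\tfrac1P\E[\|W_1^{(i)}\|^2]$, aggregate the $N$ blocks in $\ell_2$, and finish with Jensen. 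Both proofs hinge on the same two essential inputs --- the uniform bound $|\partial_i F_1|\le 2/N$ from Theorem~\ref{thm:sorting} and $c_d\in O(\sqrt d)$ --- so the rate is the same, but the trade-off differs: the paper's route delivers a full sub-Gaussian tail bound (strictly more information than the first moment), while yours is shorter, avoids the concentration machinery entirely, and in fact yields the slightly sharper constant $4c_d/\sqrt{NP}$, gaining a factor $1/\sqrt N$ precisely because you sum the squared block norms rather than applying the triangle inequality across blocks as the paper does. One minor point to make explicit if you write this up: for $\xi$ in the measure-zero set where projected points collide, $\partial_i F_1$ should be read as the subgradient element produced by the sorting formula, which satisfies the same $2/N$ bound; the paper glosses over this in exactly the same way.
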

%-----------------------------

To verify this convergence rate numerically, we draw $N=1000$ samples $x_1,...,x_N$ from a Gaussian mixture model with two components and $M=1000$ samples $y_1,...,y_M$ from a Gaussian mixture model with ten components.
The means are chosen randomly following a uniform distribution in $[-1,1]^d$ and the standard deviation is set to $0.01$.
Then, we compute numerically the expected relative approximation error between 
$\tilde \nabla_P F_d$ and $\nabla F_d$ for different choices of $P$ and  $d$.
The results are illustrated in the middle and in the right plot of Figure~\ref{fig:comparison_runtime}.
We observe that this numerical evaluation underlines the convergence rate of $O\Big(\sqrt{\frac{d}{P}}\Big)$. In particular, the error scales with $O(\sqrt{d/P})$, which makes the approach applicable for high-dimensional problems.

%----------------------------------------
\begin{remark}[Computational Complexity of Gradient Evaluations]\label{rem:complexity_gradient_evaluation}
The appearance of the $\sqrt{d}$ in the error bound is due to the scale factor $c_d$ between the MMD and the sliced MMD, which can be seen in the proof of Theorem~\ref{thm:convergence_rate}.
In particular,
we require $O(d)$ projections in order to approximate $\nabla F_d(\zb x|\zb y)$ by $\tilde \nabla_P F_d(\zb x|\zb y)$ up to a fixed expected error of $\epsilon$. Together with the computational complexity of
$
O(dP(N+M)\log(N+M))
$
for 
$
\tilde \nabla_P F_d(\zb x,\zb y)
$,
we obtain an overall complexity of
$O(d^2(N+M)\log(N+M))$ in order to approximate $\nabla F_d(\zb x|\zb y)$ up to an expected error of $\epsilon$.
On the other hand, the naive computation of (gradients of) $F_d(\zb x|\zb y)$ has a complexity of
$O(d(N^2+MN))$.
Consequently, we improve the quadratic complexity in the number of samples to $O(N\log(N))$. Here, we pay the price of quadratic instead of linear complexity in the dimension.
\end{remark}

\section{Generative MMD Flows}\label{sec:4}

In this section, we use MMD flows
with the negative distance kernel for generative modelling.
Throughout this section, we assume that we are given independent samples $y_1,...,y_M\in \R^d$ from a target measure
$\nu\in\P(\R^d)$ and define the empirical version of $\nu$ by $\nu_M\coloneqq \frac1M\sum_{i=1}^M\delta_{y_i}$.

\subsection{MMD Particle Flows}

In order to derive a generative model approximating $\nu$, we simulate a gradient flow of the functional $\F_\nu$ from \eqref{eq:MMD_ojective_fun}.
Unfortunately, the computation of gradient flows in measure spaces for $\F_\nu$ is highly non-trivial 
and computationally costly, see \citep{altekruger2023neural,hertrich2023wasserstein}.
Therefore, we consider the (rescaled) gradient flow with respect to the functional $F_d$ instead.
More precisely, we simulate for $F_d$ from \eqref{eq:discrete_MMD},
the (Euclidean) differential equation 
\begin{equation}\label{eq:ODE}
\dot {\zb x}=-N\,\nabla F_d(\zb x|\zb y),\quad x(0)=(x_1^{(0)},...,x_N^{(0)}),
\end{equation}
where the initial points $x_i^{(0)}$ are drawn independently from some measure
$\mu_0\in\mathcal P_2(\R^d)$. In our numerical experiments, we set $\mu_0$ to the uniform distribution on $[0,1]^d$. 
Then, for any solution $x(t)=(x_1(t),...,x_N(t))$ of \eqref{eq:ODE}, it is proven in \citep[Proposition 14]{altekruger2023neural} that the curve $\gamma_{M,N}\colon(0,\infty)\to\P(\R^d)$ defined by $\gamma_{M,N}(t)=\frac1N\sum_{i=1}^N\delta_{x_i(t)}$ is a Wasserstein gradient flow with respect to the function
\begin{align*}
\mathcal F\colon\P(\R^d)\to\R\cup\{\infty\}, \quad \mu\mapsto 
\begin{cases}
\F_{\nu_M},&$if $\mu=\frac{1}{N}\sum_{i=1}^N\delta_{x_i}$ for some $x_i\neq x_j\in\R^d,\\
+\infty,&$otherwise.$ 
\end{cases}
\end{align*}
Hence, we can expect for $M,N\to\infty$, that the curve $\gamma_{M,N}$ approximates the Wasserstein gradient flow with respect to $\F_\nu$.
Consequently, we can derive a generative model by simulating the gradient flow \eqref{eq:ODE}. To this end, we use the explicit Euler scheme
\begin{align}\label{eq:MMD_GD}
\zb x^{(k+1)}&=\zb x^{(k)}-\tau N \nabla F_d(\zb x^{(k)}|\zb y),
\end{align}
where $\zb x^{(k)} = (x_1^{(k)},...,x_N^{(k)})$ and $\tau>0$ is some step size.
Here, the gradient on the right-hand side can be evaluated very efficiently by the stochastic gradient estimator from \eqref{eq:def_grad_est1}.

\textbf{Momentum MMD Flows.}
To reduce the required number of steps in \eqref{eq:MMD_GD}, we introduce a momentum parameter.
More precisely, for some given momentum parameter $m\in[0,1)$ we consider the momentum MMD flow defined by the following iteration
\begin{equation}\label{eq:Mom_MMD_GD}
\begin{aligned}
\zb v^{(k+1)}&=\nabla F_d(\zb x^{(k)}|\zb y)
+m \, \zb v^{(k)} \\
\zb x^{(k+1)}&=\zb x^{(k)}-\tau N\ \zb v^{(k+1)},
\end{aligned}
\end{equation}
where $\tau>0$ is some step size,  $x_i^{(0)}$ are independent samples from a initial measure $\mu_0$ and $v_i^{(0)}=0$.
Note that the MMD flow \eqref{eq:MMD_GD} is a special case of the momentum MMD flow \eqref{eq:Mom_MMD_GD} with $m=0$. 

In Figure~\ref{fig:trajectories_momentum}, we illustrate the momentum MMD flow \eqref{eq:Mom_MMD_GD} and MMD flow \eqref{eq:MMD_GD} without momentum from a uniform distribution on $[0,1]^d$ to MNIST \citep{LBBH1998} and CIFAR10 \citep{K2009}. The momentum is set to $m=0.9$ for MNIST and to $m=0.6$ for CIFAR10. We observe that the momentum MMD flow \eqref{eq:Mom_MMD_GD} converges indeed faster than the MMD flow \eqref{eq:MMD_GD} without momentum.

\begin{figure*}[t!]
\centering
\begin{subfigure}[t]{.45\textwidth}
\includegraphics[width=\linewidth]{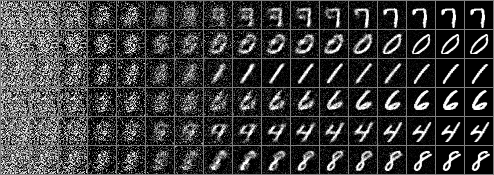}
\end{subfigure}
\hspace{0.1cm}
\begin{subfigure}[t]{.45\textwidth}
\includegraphics[width=\linewidth]{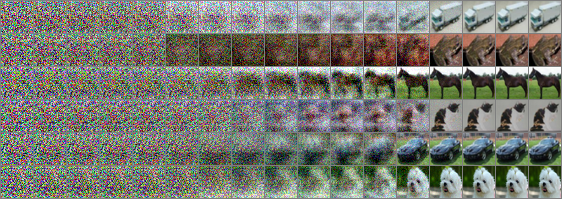}
\end{subfigure}

\begin{subfigure}[t]{.45\textwidth}
\includegraphics[width=\linewidth]{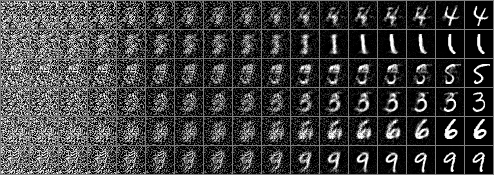}
\end{subfigure}
\hspace{0.1cm}
\begin{subfigure}[t]{.45\textwidth}
\includegraphics[width=\linewidth]{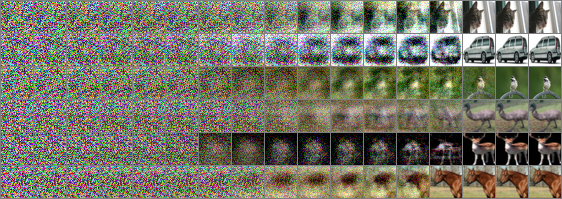}
\end{subfigure}
\caption{Samples and their trajectories from MNIST (left) and CIFAR10 (right) in the MMD flow with momentum (\ref{eq:Mom_MMD_GD}, top) and without momentum (\ref{eq:MMD_GD}, bottom) starting in the uniform distribution on $[0,1]^d$ after $2^k$ steps with $k\in\{0,...,16\}$ (for MNIST) and $k\in\{3,...,19\}$ (for CIFAR10). We observe that the momentum MMD flow \eqref{eq:Mom_MMD_GD} converges faster than the MMD flow \eqref{eq:MMD_GD} without momentum.}
\label{fig:trajectories_momentum}
\end{figure*}

\subsection{Generative MMD Flows}
The (momentum) MMD flows from \eqref{eq:MMD_GD} and \eqref{eq:Mom_MMD_GD} transform samples from the initial distribution $\mu_0$ into samples from the target distribution $\nu$.
Therefore, we propose to train a generative model which approximates these schemes.
The main idea is to approximate the Wasserstein gradient flow $\gamma\colon[0,\infty)\to\mathcal P_2(\mathbb R^d)$ with respect to $\F_\nu$ from \eqref{eq:MMD_ojective_fun} starting at some latent distribution $\mu_0=\gamma(0)$.
Then, we iteratively train neural networks $\Phi_1,...,\Phi_L$ such that $\gamma(t_l)\approx {\Phi_l}_\#\gamma(t_{l-1})$ for some $t_l$ with $0=t_0<t_1<\cdots<t_L$. 
Then, for $t_L$ large enough, it holds $\nu\approx\gamma(t_L)\approx(\Phi_L\circ\cdots\circ\Phi_1)_\#\gamma(0)$ with $\gamma(0)=\mu_0$.
Such methods learning iteratively an "interpolation path" are exploited several times in literature, e.g., \cite{arbel2021annealed,Fan22,ho2020denoising}.
To implement this numerically, we train each network $\Phi_l$ such that it
approximates $T_l$ number of steps from \eqref{eq:MMD_GD} or \eqref{eq:Mom_MMD_GD}. 
The training procedure of our generative MMD flow is summarized in Algorithm~\ref{alg:training_gen_MMD_flows} in Appendix~\ref{app:training_algorithm}.
Once the networks $\Phi_1,...,\Phi_L$ are trained, we can infer a new sample $x$ from our (approximated) target distribution $\nu$ as follows. We draw a sample $x^{(0)}$ from $\mu_0$, compute $x^{(l)}=x^{(l-1)}-\Phi_l(x^{(l-1)})$ for $l=1,...,L$ and set $x=x^{(L)}$. 
In particular, this allows us to simulate paths of the discrepancy flow we have not trained on. 

\begin{remark}[Iterative Training and Sampling]
Since the networks are not trained in an end-to-end fashion but separately, their GPU memory load is relatively low despite a high number of trainable parameters of the full model $(\Phi_l)_{l=1}^L$. This enables training of our model on an 8 GB GPU. Moreover, the training can easily be continued by adding additional networks $\Phi_l$, $l=L+1,...,L'$ to an already trained generative MMD flow $(\Phi_l)_{l=1}^L$, which makes applications more flexible.
\end{remark}

\section{Numerical Examples}\label{sec:5}

\begin{figure*}[b!]
\centering
\begin{subfigure}[t]{.245\textwidth}
\includegraphics[width=\linewidth]{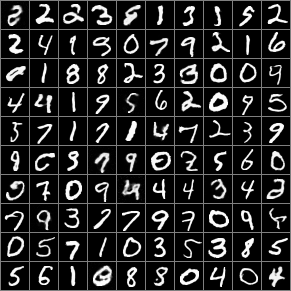}
\end{subfigure}
\hfill
\begin{subfigure}[t]{.245\textwidth}
\includegraphics[width=\linewidth]{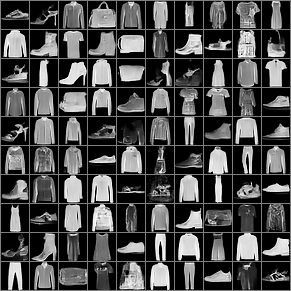}
\end{subfigure}
\hfill
\begin{subfigure}[t]{.245\textwidth}
\includegraphics[width=\linewidth]{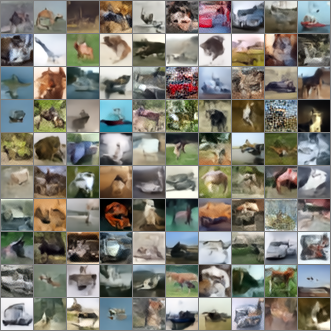}
\end{subfigure}
\hfill
\begin{subfigure}[t]{.245\textwidth}
\includegraphics[width=\linewidth]{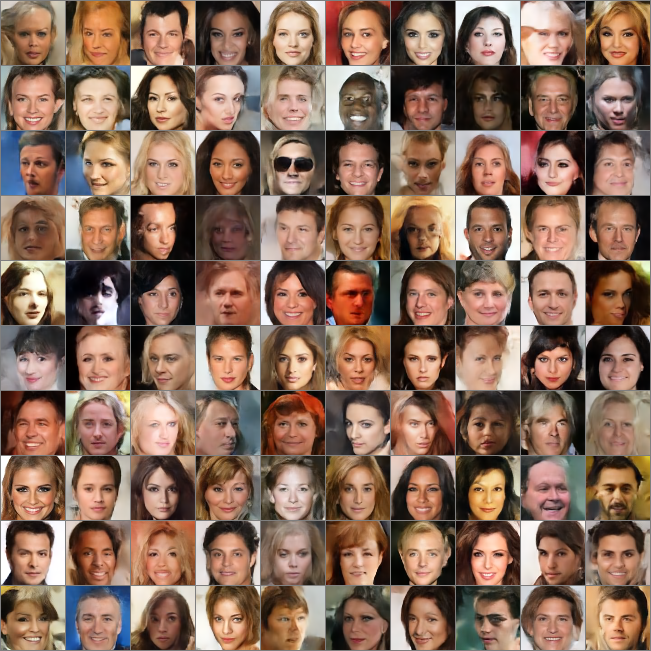}
\end{subfigure}
\caption{
Generated samples of our generative MMD Flow.}
\label{fig:generated_samples}
\end{figure*}

In this section, we apply generative MMD flows for image generation on MNIST, FashionMNIST \citep{XRV2017},CIFAR10 and CelebA \citep{LLWT2015}. The images from MNIST and FashionMNIST are $28\times 28$ gray-value images, while CIFAR10 consists of $32\times32$ RGB images resulting in the dimensions $d=784$ and $d=3072$, respectively. For CelebA, we centercrop the images to $140 \times 140$ and then bicubicely resize them to $64 \times 64$.
We run all experiments either on a single NVIDIA GeForce RTX 3060 or a RTX 4090 GPU with 12GB or 24GB memory,respectively.
To evaluate our results, we use the Fr\'echet inception distance (FID) \citep{HRUNH2017}\footnote{We use the implementation from \url{https://github.com/mseitzer/pytorch-fid}.} between 10K generated samples and the test dataset.
Here, a smaller FID value indicates a higher similarity between generated and test samples.

We choose the networks $(\Phi_l)_{l=1}^L$ to be UNets \citep{RFB2015}, where we use the implementation from \citep{huang2021variational} based on \citep{ho2020denoising}.
Then, we run the generative MMD flow for $L=55$ (MNIST), $L=67$ (FashionMNIST), $L=86$ (CIFAR10) and $L=71$ (CelebA) networks.
The exact setup is described in Appendix~\ref{app:implementation}.
We compare the resulting FIDs with other gradient-flow-based models and various further methods in Table~\ref{table:FID_FASHMNIST}. We computed the standard deviations by independently sampling ten times from one training run and computing the corresponding FID.
We observe that we achieve excellent performance on MNIST and FashionMNIST as well as very good results on CIFAR10 and CelebA. 
Generated samples are given in Figure~\ref{fig:generated_samples} and more samples are given in Appendix~\ref{app:add_examples}. The $L2$-nearest neighbors of the generated samples on MNIST are also illustrated in Figure~\ref{fig:diff_imgs} in Appendix~\ref{app:add_examples}.

\begin{table}[t]
\caption{FID scores for different datasets and various methods.}  
\label{table:FID_FASHMNIST}
\centering
\scalebox{.71}{
\begin{tabular}[t]{c|ccccc} 
                Method                                   & MNIST   & FashionMNIST & CIFAR10  & CelebA \\
\hline
\textit{Auto-encoder based} & & & \\
CWAE \citep{KSTPMJ2020}          & 23.6    & 50.0         & 120.0    & 49.7 \\
SWF+ Autoencoder + RealNVP  \citep{bonet2022efficient}   & 17.8    & 40.6         & -       & 90.9  \\
2-stage VAE \citep{dai2018diagnosing}                    & 12.6    & 29.3         & 72.9    & 44.4  \\
GLF \citep{XYA2019}                                       & 8.2     & 21.3         & 88.3    & 53.2  \\
\hline 
\textit{Adversarial} & & & \\
WGAN \citep{ACB2017,LKMGB2018}                            & 6.7     & 21.5         &  55.2       & 41.3    \\           
MMD GAN \citep{binkowski2018demystifying}                 & 4.2     &  -           & 48.1     & \textbf{29.2}  \\
\hline
\textit{Score-based} & & &\\
NCSN \citep{SE2019}                                       & -       &  -           & \textbf{25.3}  & - \\
\hline
\textit{Flow based} & & & \\
SWF \citep{liutkus19} \footnotemark[3]                   &  225.1  &  207.6       &   -      & 91.2 \\
  
SIG \citep{dai21a}                                        & 4.5     & 13.7         &  66.5      &  37.3 \\
$\ell$-SWF \citep{DLPYL2023}                                          & -       & -            &  59.7 & 38.3 \\
Generative Sliced MMD Flow (ours)                        & \textbf{3.1} $\pm$ 0.06 & \textbf{11.3}$\pm$ 0.07    & 54.8 $\pm$ 0.26  & $ 32.1 \pm$ 0.17  \\
\end{tabular}}
\end{table}

\footnotetext[3]{values taken from \cite{bonet2022efficient}}

\section{Conclusions}\label{sec:6}

\textbf{Discussion.}
We introduced an algorithm to compute (gradients) of the MMD with Riesz kernel efficiently via slicing and sorting reducing the dependence of the computational complexity on the number of particles from $O(NM+N^2)$ to $O((N+M)\log(N+M))$.
For the implementations, we approximated the gradient of sliced MMD by a finite number of slices and proved that the corresponding approximation error depends by a square root on the dimension.
We applied our algorithm for computing MMD flows and approximated them by neural networks. Here, a sequential learning approach ensures computational efficiency.
We included numerical examples for image generation on MNIST, FashionMNIST and CIFAR10. 

\textbf{Limitations.} One of the disadvantages of interacting particle methods is that batching is not easily possible: The particle flow for one set of training points does not give a helpful approximation for another set of training points. This is due to the interaction energy and a general problem of particle flows. Furthermore, taking the projections involves multiplication of every data point with a "full" projection and therefore scales with the dimension $d$. Taking "local" projections like in \citep{DLPYL2023,nguyen2022revisiting} can be much more efficient.

\textbf{Outlook.} Our paper is the first work which utilizes sliced MMD flows for generative modelling. Consequently the approach can be extended in several directions.
Other kernels are considered in the context of slicing in the follow-up paper \citep{H2024}.
From a theoretical viewpoint, the derivative formulas from Theorem~\ref{thm:sorting} can be extended to the non-discrete case by the use of quantile functions, see \citep{BCDP2015,HBGS2023} for some first approaches into this direction.
Towards applications, we could extend the framework to posterior sampling in Bayesian inverse problems.
In this context, the fast computation of MMD gradients can be also of interest for applications which are not based on gradient flows, see e.g., \citet{ardizzone2018analyzing}.
Finally, the consideration of sliced probability metrics is closely related to the Radon transform and is therefore of interest also for non-Euclidean domains like the sphere, see e.g.,~\citep{BBCS2022,QBS2023}.

\subsubsection*{Acknowledgements}
Many thanks to J. Chemseddine for providing parts of the proof of Theorem 2, and to R. Beinert and G. Steidl for fruitful discussions. We thank Mitchell Krock for finding a typo.
J.H.~acknowledges funding by the German Research Foundation (DFG) within the project STE 571/16-1 and by the EPSRC programme grant ``The Mathematics of Deep Learning'' with reference EP/V026259/1, C.W.~by the DFG within the SFB “Tomography Across the Scales” (STE 571/19-1, project number: 495365311),
F.A.~by the DFG under Germany`s Excellence Strategy – The Berlin Mathematics Research Center MATH+ (project AA 5-6),
and P.H. from the DFG within the project SPP 2298 "Theoretical Foundations of Deep Learning".

\bibliographystyle{iclr2024_conference}
\bibliography{references}

\newpage
\appendix

\section{Proof of Theorem~\ref{sliced:unsliced}}\label{proof:sliced:unsliced}
    Let $\mathcal U_{\Sp^{d-1}}$ be the uniform distribution on $\Sp^{d-1}$ and let $\mathrm{k}(x,y) = -|x-y|^r$ for $x,y\in\R, x\neq y$ and $r\in(0,2)$. Moreover, denote by $e=(1,...,0)\in\Sp^{d-1}$ the first unit vector. Then, we have for $x,y \in \R^d$ that
    \begin{align*}
        K(x,y) &:= -\int_{\mathbb S^{d-1}}|\langle \xi,x\rangle - \langle\xi,y\rangle|^r\d \mathcal U_{\Sp^{d-1}}(\xi) = -\|x-y\|^r\int_{\mathbb S^{d-1}}\left|\left\langle \xi,\frac{x-y}{\|x-y\|}\right\rangle\right|^r\d \mathcal U_{\Sp^{d-1}}(\xi)\\
                         &= -\|x-y\|^r\int_{\mathbb S^{d-1}}|\langle\xi,e\rangle|^r\d \mathcal U_{\Sp^{d-1}}(\xi) = -\|x-y\|^r \underbrace{\int_{\mathbb S^{d-1}}|\xi_1|^r\d \mathcal U_{\Sp^{d-1}}(\xi)}_{\eqqcolon c_{d,r}^{-1}}.
    \end{align*}
    It remains to compute the constant $c_{d,r}$ which is straightforward for $d= 1$. For $d>1$ the map
    \[
    (t,\eta)\mapsto (t,\eta\sqrt{1-t^2})
    \]
    defined on $[-1,1)\times\Sp^{d-2}$
    is a parametrization of $\Sp^{d-1}$. The surface measure on $\Sp^{d-1}$ is then given by 
    \[
    \d\sigma_{\Sp^{d-1}}(\xi) = (1-t^2)^{\frac{d-3}{2}}\d\sigma_{\Sp^{d-2}}(\eta)\d t,
    \]
    see \citep[Eq. 1.16]{atkinson2012spherical}. Furthermore, the uniform surface measure $\mathcal U_{\mathbb S^{d-1}}$ reads as 
    \begin{equation}\label{eq:transformation_spherical}
    \d\mathcal U_{\Sp^{d-1}}(\xi)=\frac{1}{s_{d-1}}(1-t^2)^{\frac{d-3}{2}}\d\sigma_{\Sp^{d-2}}(\eta)\d t,
    \end{equation}
    where $s_{d-1}$ is the volume of $\Sp^{d-1}$.
    Hence
    \begin{align}\label{eq:beta}
        c_{d,r}^{-1} &= \int_{\mathbb S^{d-1}}|\xi_1|^r\d \mathcal U_{\Sp^{d-1}}(\xi)
               =\frac{1}{s_{d-1}}\int_{\Sp^{d-2}}\int_{-1}^{1}|t|^r(1-t^2)^{\frac{d-3}{2}}\ \d t d\sigma_{\Sp^{d-2}}(\eta)\\
               &= \frac{s_{d-2}}{s_{d-1}}2\int_0^1t^r(1-t^2)^{\frac{d-3}{2}} \d t
               = \frac{s_{d-2}}{s_{d-1}}B\left(\frac{r+1}{2},\frac{d-1}{2}\right),
    \end{align}
    where $B(z_1,z_2)$ is the beta function and we used the integral identity
    \[
    B(z_1,z_2) = 2\int_0^1 t^{2z_1-1}(1-t^2)^{z_2-1}\d t.
    \]
    Finally, noting that $s_{d-1}=\frac{2\pi^{d/2}}{\Gamma(\frac{d}{2})}$ and $\Beta(z_1,z_2)=\frac{\Gamma(z_1)\Gamma(z_2)}{\Gamma(z_1+z_2)}$, \eqref{eq:beta} can be computed as
    \begin{align*}
    c_{d,r}^{-1}=\frac{\Gamma(\frac{d}{2})}{\sqrt{\pi}\Gamma(\frac{d-1}{2})}\frac{\Gamma(\frac{r+1}{2})\Gamma(\frac{d-1}{2})}{\Gamma(\frac{r+d}{2})}=\frac{\Gamma(\frac{d}{2})\Gamma(\frac{r+1}{2})}{\sqrt{\pi}\Gamma(\frac{d+r}{2})}
    \end{align*}
    Taking the inverse gives the claim. \hfill$\Box$
    
    \begin{remark}[Extension to $\mathcal P_{\frac{r}{2}}(\R^d)$] 
    We can extend Theorem \ref{sliced:unsliced} to $\mathcal P_{\frac{r}{2}}(\R^d)$. To this end, we first show, how we can deduce from \cite[Prop 2.14]{MD2023} that the MMD $\mathcal{D}_{\tilde{K}}$ with respect to the extended Riesz kernel $\tilde K(x,y)=-\|x-y\|^r + \|x\|^r + \|y\|^r$ defines a metric on $\mathcal P_{\frac r 2}(\R^d)$.
    Second, we will see that Theorem \ref{sliced:unsliced} can be extended to $\mathcal P_{\frac{r}{2}}(\R^d)$ as well.

    \begin{enumerate}
    \item[(i)] By \cite[Thm 4.26]{SC2008} the MMD
$$
\mathcal D_{\tilde K}^2(\mu,\nu)=\int\int \tilde K(x,y) d(\mu-\nu)(x)d(\mu-\nu)(y)
$$
is finite on $\mathcal M_{\tilde K}=\{\mu \in \mathcal M(\mathbb R^d):\int \sqrt{\tilde K(x,x)}\mathrm d |\mu|<\infty\}$, where $\mathcal M(\mathbb R^d)$ is the space of all signed measures on $\mathbb R^d$.
Since $\tilde K(x,x)=2\|x\|^r$, we have that $\mathcal M_{\tilde K}=\{\mu \in \mathcal M(\mathbb R^d):\int \|x\|^{\frac r 2}\mathrm d |\mu|<\infty\}$ such that $\mathcal M_{\tilde K}\cap\mathcal P(\mathbb R^d)=\mathcal P_{\frac r 2}(\mathbb R^d)$.
Now, inserting $\tilde K$ \cite[Prop 2.14]{MD2023} states that $\mathcal D_{\tilde{K}}$ is a metric on $\mathcal M_{\tilde K}\cap \mathcal P(\mathbb R^d)=\mathcal P_{\frac r 2}(\mathbb R^d)$, where the assumptions of the proposition are checked in \cite[Ex 2, Ex 3]{MD2023} and $\tilde K$ is named $k_H$ with $H=\frac r 2$.
\item[(ii)] Let $\tilde{\rm K}=c_{d,r}^{-1}\tilde K$ be a rescaled and $\tilde{k}(x,y) = -|x-y|^r+|x|^r+|y|^r$ be the one-dimensional version of the extended Riesz kernel $\tilde K$. With the same calculations as in the proof of Theorem \ref{sliced:unsliced} we can show that
    \begin{align*}
    \tilde{\rm K}(x,y) &= \int_{\mathbb S^{d-1}}\tilde{k}(\langle \xi,x\rangle,\langle\xi,y\rangle)\d \mathcal U_{\Sp^{d-1}}(\xi).
    \end{align*}
    Thus, we also have that $\mathcal S\mathcal D_{\tilde{k}}=\mathcal D_{\tilde{\rm K}}$ on $\mathcal P_{\frac r 2}(\R^d)$.
    \end{enumerate}
    \end{remark}	
		
%---------------------------------------------------------------------------------
\section{Proof of Theorem~\ref{thm:rel}}\label{proof:rel}
%---------------------------------------------------------------------------------
In the first part of the  proof, we will use properties of reproducing kernel Hilbert spaces (RKHS), see \citep{SC2008} for an overview on RKHS.
For the second part, we will need two lemmata. 
The first one gives a definition of the MMD in terms of the Fourier transform
(characteristic function) of the involved measures, where 
$\hat \mu (\xi) = \langle e^{-ix\xi}, \mu \rangle=\int_{\R^d} e^{-i\langle \xi,x\rangle}\d\mu(x)$.
Its proof can be found, e.g., in \citep[Proposition 2]{szekely_energy}. Note that the constant on the right hand-side differs from \citep[Proposition 2]{szekely_energy}, since we use a different notion of the Fourier transform and the constant $\frac12$ in the MMD.

\begin{lemma} \label{fourier}
Let $\mu \in \mathcal P_1(\mathbb{R^d})$ and $K(x,y) = -\|x-y\|$.
Then its holds                                                          
$$
\mathcal D_{K}^2 (\mu,\nu) 
= 
\frac{ \Gamma(\frac{d+1}{2})}{2\pi^\frac{d+1}{2} }
\int_{\mathbb R^d} 
\frac{|\hat \mu(\xi) - \hat \nu(\xi)|^2}{\|\xi\|^{1+d} } \, \d \xi.
$$
\end{lemma}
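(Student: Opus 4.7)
My plan is to derive the Fourier representation by expressing $-\|x-y\|$ itself as a Fourier integral (a Riesz-type representation of the distance function) and then swapping integrals via Fubini. The starting point is the identity
\begin{align*}
\|z\|=\frac{\Gamma\!\left(\frac{d+1}{2}\right)}{\pi^{(d+1)/2}}\int_{\R^d}\frac{1-\cos(\langle\xi,z\rangle)}{\|\xi\|^{d+1}}\,\mathrm d\xi,\qquad z\in\R^d,
\end{align*}
which I would verify via polar coordinates $\xi=r\omega$: after the substitution $u=r|\langle\omega,z\rangle|$ the radial part equals $\tfrac{\pi}{2}|\langle\omega,z\rangle|$ by the classical identity $\int_0^\infty(1-\cos u)/u^2\,\mathrm du=\pi/2$, and the remaining surface integral $\int_{\Sp^{d-1}}|\langle\omega,z\rangle|\,\mathrm d\sigma(\omega)$ equals $\|z\|\,s_{d-1}c_d^{-1}$ using the relation $c_d^{-1}=\E_{\omega\sim\mathcal U_{\Sp^{d-1}}}[|\langle\omega,e\rangle|]$ from the proof of Theorem~\ref{sliced:unsliced}. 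Substituting $s_{d-1}=2\pi^{d/2}/\Gamma(\tfrac{d}{2})$ and $c_d=\sqrt{\pi}\,\Gamma(\tfrac{d+1}{2})/\Gamma(\tfrac{d}{2})$ then reproduces the constant in the display.

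With this identity in hand, I would set $\eta\coloneqq\mu-\nu$ and note that $\eta(\R^d)=0$. Plugging the representation into $\mathcal D_K^2(\mu,\nu)=-\tfrac12\int\!\!\int\|x-y\|\,\mathrm d\eta(x)\,\mathrm d\eta(y)$ and exchanging the $\xi$-integral with the $\eta\otimes\eta$ integral by Fubini yields
\begin{align*}
\mathcal D_K^2(\mu,\nu)=\frac{\Gamma\!\left(\frac{d+1}{2}\right)}{2\pi^{(d+1)/2}}\int_{\R^d}\frac{\int\!\!\int(\cos(\langle\xi,x-y\rangle)-1)\,\mathrm d\eta(x)\,\mathrm d\eta(y)}{\|\xi\|^{d+1}}\,\mathrm d\xi.
\end{align*}
The constant $-1$ contributes $-\eta(\R^d)^2=0$, while the cosine piece factors through the characteristic function: with the convention $\hat\eta(\xi)=\int e^{-\I\langle\xi,x\rangle}\,\mathrm d\eta(x)$ and $\eta$ real one has $\int e^{\I\langle\xi,x\rangle}\,\mathrm d\eta(x)=\overline{\hat\eta(\xi)}$, so that $\int\!\!\int\cos(\langle\xi,x-y\rangle)\,\mathrm d\eta(x)\,\mathrm d\eta(y)=|\hat\eta(\xi)|^2=|\hat\mu(\xi)-\hat\nu(\xi)|^2$. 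This delivers the claim.

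The main obstacle is the application of Fubini: the representation of $\|z\|$ is only conditionally convergent near $\xi=0$, and the right-hand side of the lemma is precisely the quantity we are computing, so integrability is not a priori clear. I would first restrict to an annulus $\{\epsilon\le\|\xi\|\le R_0\}$, where the integrand is bounded and Fubini applies trivially, and then pass to the limits $\epsilon\to 0$ and $R_0\to\infty$. The identity $\eta(\R^d)=0$ absorbs the singular low-frequency contribution of the $-1$ term, while $\mu,\nu\in\mathcal P_1$ forces $\hat\eta(\xi)=O(\|\xi\|)$ near the origin, so both sides are finite and the limits are justified by dominated convergence.
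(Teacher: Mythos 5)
Your proof is correct, and it is essentially the standard argument behind this identity: the paper does not prove Lemma~\ref{fourier} itself but defers to \citet{szekely_energy}, whose Proposition~2 is established in exactly this way, via the Fourier representation of $\|z\|$ as $\frac{\Gamma(\frac{d+1}{2})}{\pi^{(d+1)/2}}\int_{\R^d}\|\xi\|^{-d-1}(1-\cos\langle\xi,z\rangle)\,\d\xi$ followed by Fubini and $\eta(\R^d)=0$. Your only overcaution is the annulus truncation: since the kernel $(1-\cos\langle\xi,x-y\rangle)/\|\xi\|^{d+1}$ is non-negative and its triple integral against $|\eta|\otimes|\eta|\otimes\mathrm{Leb}$ equals a constant times $\int\int\|x-y\|\,\d|\eta|(x)\d|\eta|(y)<\infty$ by the first-moment assumption, Tonelli--Fubini applies directly, and the split into the $1$ and $\cos$ pieces is harmless afterwards because each inner integral against the finite signed measure $\eta\otimes\eta$ is separately finite for every fixed $\xi$.
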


For the next lemma, recall that the cumulative density functions (cdf)  
of $\mu \in \mathcal P(\R)$ is the function $F_\mu: \R \to [0,1]$ defined by 
$$
F_\mu(x) \coloneqq \mu((-\infty, x]) =
\int_\R\chi_{(-\infty,x]}(y)\d \mu(y),\quad \chi_A(x)=
\begin{cases}
1,&\text{if } x\in A,
\\0,&\text{otherwise}.
\end{cases}
$$
We will need that the \emph{Cramer distance} 
between probability measures $\mu,\nu$ with cdfs 
$F_\mu$ and $F_\nu$
defined by 
$$
\ell_p(\mu,\nu) \coloneqq \left(\int_{\mathbb R} |F_\mu - F_\nu|^p \dx x \right)^\frac{1}{p},
$$
if it exists. The Cramer distance does not exist for arbitrary probability measures.
However, for  $\mu,\nu \in \mathcal P_1(\R)$ is well-known that
\begin{equation}\label{l_1+W_1}
\ell_1(\mu,\nu) = \mathcal W_1(\mu,\nu)
\end{equation}
and we have indeed $F_\mu - F_\nu \in L_2(\R)$. 
The following relation can be found in the literature, see, e.g., \citet{szekely2002}.
However, we prefer to add a proof which clearly shows which assumptions are necessary.

\begin{lemma} \label{l2+D}
Let $\mu, \nu \in \mathcal P_1(\mathbb{R})$ and $\rm{k}(x.y) = -|x-y|$.
Then it holds
$$
\ell_2(\mu,\nu) = \mathcal D_{\rm{k}} (\mu,\nu).$$
\end{lemma}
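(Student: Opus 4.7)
The plan is to rewrite the cost $|x-y|$ as an $L^2$-integral of indicator-difference functions and then integrate against the signed measure $\eta = \mu - \nu$ via Fubini. Concretely, for any $x,y \in \R$, since $\chi_{(-\infty,x]}(t)-\chi_{(-\infty,y]}(t)$ takes only the values $0,\pm 1$ and is supported on the interval between $x$ and $y$, one has the elementary identity
\begin{equation*}
|x-y| \;=\; \int_{\R}\bigl(\chi_{(-\infty,x]}(t)-\chi_{(-\infty,y]}(t)\bigr)^{2}\,\d t.
\end{equation*}
This is the key algebraic fact that relates the negative-distance kernel to the squared $L^{2}$-norm of difference of cdfs.

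Next I would set $\eta := \mu - \nu$ (a finite signed measure with $\eta(\R)=0$ and finite first moment since $\mu,\nu \in \P_1(\R)$), and expand the square:
\begin{equation*}
\bigl(\chi_{(-\infty,x]}(t)-\chi_{(-\infty,y]}(t)\bigr)^{2}
= \chi_{(-\infty,x]}(t) + \chi_{(-\infty,y]}(t) - 2\,\chi_{(-\infty,x]}(t)\chi_{(-\infty,y]}(t),
\end{equation*}
using that indicator functions equal their own squares. Plugging this into $\mathcal D_{\mathrm k}^{2}(\mu,\nu) = -\tfrac{1}{2}\iint |x-y|\,\d\eta(x)\d\eta(y)$ and applying Fubini (which is justified because the first moments of $\mu$ and $\nu$ are finite, so $\iint |x-y|\,\d|\eta|(x)\d|\eta|(y)<\infty$) produces three $\d t$-integrals. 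For the first two, the inner double integration yields $\eta([t,\infty))\cdot\eta(\R)=0$ (and symmetrically in $y$) since $\eta$ is a zero-mass signed measure. Only the cross term survives, and for a.e.\ $t \in \R$ it equals
\begin{equation*}
\int_{\R}\chi_{(-\infty,x]}(t)\,\d\eta(x)\cdot\int_{\R}\chi_{(-\infty,y]}(t)\,\d\eta(y)
\;=\;\bigl(-F_\eta(t)\bigr)^{2}\;=\;\bigl(F_\mu(t)-F_\nu(t)\bigr)^{2},
\end{equation*}
where I use $\eta([t,\infty))=-\eta((-\infty,t))$ and that $\eta((-\infty,t))$ agrees with $F_\eta(t)$ except on the at most countable set of atoms, which has Lebesgue measure zero.

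Collecting everything gives
\begin{equation*}
\mathcal D_{\mathrm k}^{2}(\mu,\nu)
= -\tfrac{1}{2}\cdot(-2)\int_{\R}\bigl(F_\mu(t)-F_\nu(t)\bigr)^{2}\d t
= \ell_2^{2}(\mu,\nu),
\end{equation*}
and taking square roots yields the claim. The only place where care is needed is the interchange of integrals: I would invoke Tonelli on the nonnegative integrand $(\chi_{(-\infty,x]}(t)-\chi_{(-\infty,y]}(t))^{2}$ against $|\eta|\otimes|\eta|\otimes\d t$, observing that integrating out $t$ yields $|x-y|$ and that $\iint |x-y|\,\d|\eta|\d|\eta|<\infty$ by the $\P_1$ assumption; this is the main, though mild, technical obstacle.
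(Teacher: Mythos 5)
Your proof is correct, but it takes a genuinely different route from the paper's. The paper works in Fourier space: it starts from the characteristic-function representation of the energy distance (Lemma~\ref{fourier}), applies Parseval's equality to $F_\mu - F_\nu \in L_2(\R)$, and uses the distributional identity $DF_\mu = \mu$, hence $\hat \mu(\xi) = -i\xi \hat F_\mu(\xi)$, to match the two integrands. You instead argue entirely on the real line: the layer-cake-type identity $|x-y| = \int_\R \bigl(\chi_{(-\infty,x]}(t)-\chi_{(-\infty,y]}(t)\bigr)^2\,\d t$, combined with $\eta(\R)=0$ for $\eta=\mu-\nu$, kills the two diagonal terms and reduces everything to the cross term $\eta([t,\infty))^2=(F_\mu(t)-F_\nu(t))^2$ for a.e.\ $t$. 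Your Tonelli justification is exactly right (the nonnegative integrand integrates in $t$ to $|x-y|$, which is $|\eta|\otimes|\eta|$-integrable by the finite first moments), and you correctly perform the expansion of the square against $\eta\otimes\eta$ for \emph{fixed} $t$ before the outer $\d t$-integration --- this ordering matters, since each of the three terms alone has infinite $t$-integral. What each approach buys: yours is more elementary and self-contained, avoiding Lemma~\ref{fourier}, Parseval, and distributional derivatives altogether, and makes the integrability issues transparent; the paper's Fourier route reuses machinery that is stated anyway for the $d$-dimensional kernel and fits the surrounding development, at the cost of needing $F_\mu-F_\nu\in L_2(\R)$ and the distributional-derivative step as separate ingredients.
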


\begin{proof}
By Lemma \ref{fourier},  we know that 
\begin{equation}\label{Fourier}
\mathcal D_{\text{k}}^2 (\mu,\nu) 
= - \frac12 \int_{\R}\int_{\R}   |x-y| \, \dx (\mu - \nu) (x) \dx (\mu - \nu) (y)
= \frac{1}{2\pi} \int_{\R}\frac{ |\hat \mu (\xi) - \hat \nu(\xi)|^2}{\xi^2} \, \dx \xi.
\end{equation}
For $\mu, \nu \in \mathcal P_1(\mathbb{R})$, we have
$F_\mu - F_\nu \in L_2(\mathbb R)$ and can apply
Parseval's equality
\begin{equation}\label{Parseval}
\ell_2^2(\mu,\nu) = \int_{\R} |F_\mu(t) - F_\nu(t)|^2 \, \dx t = \frac{1}{2 \pi}\int_{\R} |\hat F_\mu(\xi) - \hat F_\nu(\xi)|^2 \, \dx \xi.
\end{equation}
Now we have for the distributional derivative of $F_\mu$ that
$
D F_\mu = \mu, 
$
which can be seen as follows:
using Fubini's theorem, 
we have for any Schwartz function $\phi \in \mathcal S(\R)$ that
\begin{align*}
  \langle D F_\mu, \phi \rangle
  = - \langle F_\mu, \phi' \rangle
  &= - \int_\R \int_{\R} \chi_{(-\infty,x]}(y) \,  \phi'(x) \, \dx \mu(y) \, \dx x
  = - \int_\R \int_{\R} \chi_{(-\infty,x]}(y) \,  \phi'(x) \,  \dx x \, \dx \mu(y)
  \\
  &= - \int_\R \int_y^\infty \phi'(x) \, \dx x \, \dx \mu(y)
    =\int_\R \phi(y) \, \dx \mu(y)
    = \langle \mu, \phi \rangle.
\end{align*}
Then we obtain by the differentiation property of the Fourier transform \citep{PPST2018} that
$$
\hat \mu (\xi) =
\widehat{D F_\mu}(\xi) = - i\xi \hat F_\mu(\xi).
$$
Finally, \eqref{Parseval} becomes
$$
\ell_2^2(\mu,\nu) = \frac{1}{2 \pi}\int_{\R} \frac{|\hat \mu(\xi) - \hat \nu(\xi)|^2}{\xi^2} \, \dx \xi,
$$
which yields  the assertion by \eqref{Fourier}.
\end{proof}

Now we can prove Theorem~\ref{thm:rel}.

\begin{proof}                                                                        
1. To prove of the first inequality, we use the reproducing property 
$$
\langle f,\tilde K(x,\cdot) \rangle_{\mathcal{H}_{\tilde K}} = f(x)
$$ 
of the kernel in the associated RKHS $\mathcal{H}_{\tilde K}$.
For any $f\in \mathcal{H}_{\tilde K}$ 
with $\|f\|_{\mathcal{H}_{\tilde K}} \leq 1$ and any $\pi \in \Pi(\mu,\nu)$, we  use the estimation
\begin{align*}
\Big| \int_{\R^d} f(x) \d (\mu -\nu)(x) \Big| 
&= 
\Big| \int_{\R^d} \int_{\R^d} f(x) -f(y) \d \pi(x,y) \Big|  
\leq 
\int_{\R^d} \int_{\R^d} |f(x) -f(y)| \d \pi(x,y) \\
&= 
\int_{\R^d} \int_{\R^d} |\langle f, \tilde K(x,\cdot) - \tilde K(y,\cdot) \rangle_{\mathcal{H}_{\tilde K}} | \d \pi(x,y)\\
&\leq 
\int_{\R^d} \int_{\R^d} \|\tilde K(x,\cdot) -\tilde K(y,\cdot)\|_{\mathcal{H}_{\tilde K}} \d \pi(x,y), 
\end{align*}
which is called ``coupling bound'' in \citep[Prop.~20]{SGFSL2010}.
Then, since 
$\|\tilde K(x,\cdot) -\tilde K(y,\cdot)\|_{\mathcal{H}_{\tilde K}}^2 = \tilde K(x,x) + \tilde K(y,y) - 2\tilde K(x,y)
= 2 \|x-y\|$
and using Jensen's inequality for the concave function $\sqrt{\cdot}$, we obtain
\begin{align*}
\Big| \int_{\R^d} f(x) \d (\mu -\nu)(x) \Big| 
&\leq
\sqrt{2} \int_{\R^d} \int_{\R^d} \|x -y\|^{\frac{1}{2}} \d \pi(x,y)  
\leq 
\left(2 \int_{\R^d} \int_{\R^d} \|x -y\|\d \pi(x,y)\right)^{\frac{1}{2}}.
\end{align*}
By the dual definition of the discrepancy 
$2 \mathcal D_K (\mu,\nu) = \sup_{\|f\|_{\mathcal{H}_{\tilde K}} \le 1} \int_{\R^d} f \d (\mu - \nu)$, see \cite{NW2010},
and 
taking the supremum over all such $f$ and the infimum over all $\pi \in \Pi(\mu,\nu)$, we finally arrive at
$$
2 \mathcal D_K^2 (\mu,\nu) \le \mathcal W_1(\mu,\nu).
$$
2. The second inequality can be seen as follows: 
by \citet[Lemma 5.1.4]{bonotte2013}, there exists a constant $c_d >0$ such that
\begin{align*}
     W_{1}(\mu, \nu)  
		&\leq 
		c_d R^{\frac{d}{d+1}} \mathcal{S} \mathcal W_1(\mu, \nu)^{\frac{1}{d+1}}
    = 
		c_d R^{\frac{d}{d+1}} 
		\left( 
		\mathbb{E}_{\xi \sim \mathcal{U}_{\mathbb{S}^{d-1}}}
		\left[\mathcal W_1 \left(P_{\xi_{\#}} \mu, P_{\xi_{\#}} \nu\right) \right] 
		\right)^{\frac{1}{d+1}}.
\end{align*}
Further, we obtain by	\eqref{l_1+W_1}, the Cauchy-Schwarz inequality
and Lemma \ref{l2+D} that
\begin{align*}
  \mathbb{E}_{\xi \sim \mathcal{U}_{\mathbb{S}^{d-1}}}
		\left[W_1 \left(P_{\xi_{\#}} \mu, P_{\xi_{\#}} \nu\right) \right]
		&=		
		\mathbb{E}_{ \xi \sim \mathcal{U}_{ \mathbb{S}^{d-1} } }
		\left[ l_1 \left( P_{\xi_\#} \mu, P_{\xi_{\#}} \nu \right) \right]
		 \\
		&\leq 
				\mathbb{E}_{ \xi \sim \mathcal{U}_{\mathbb{S}^{d-1} } } 
		\left[(2R)^{\frac{1}{2}}
		l_2 \left(P_{\xi_{\#}} \mu, P_{\xi_{\#}} \nu \right) 
		\right]
		     \\
			&=  
			(2R)^{\frac{1}{2}}
			\mathbb{E}_{\xi \sim \mathcal{U}_{\mathbb{S}^{d-1}}}
		\left[\mathcal{D}_{\text{k}}\left(P_{\xi_{\#}} \mu, P_{\xi_{\#}} \nu\right)\right]
	     \\
		&\leq 
		(2R)^{\frac{1}{2}} 
				\left( \mathbb{E}_{\xi \sim \mathcal{U}_{\mathbb{S}^{d-1}}}
				\left[\mathcal{D}^2_{{\text{k}}}
		\left(P_{\xi_{\#}} \mu, P_{\xi_{\#}} \nu\right)
		\right] \right)^\frac12,
\end{align*} 
and finally by Theorem \ref{sliced:unsliced} that
\begin{align*}
  \mathbb{E}_{\xi \sim \mathcal{U}_{\mathbb{S}^{d-1}}}
		\left[W_1 \left(P_{\xi_{\#}} \mu, P_{\xi_{\#}} \nu\right) \right]
  &\leq (2R)^{\frac{1}{2}}  \mathcal{D}_{\mathrm{K}}(\mu, \nu)
   = (2R)^{\frac{1}{2}}
  \left( \frac{\Gamma(\frac{d}{2})}{\sqrt{\pi}\Gamma(\frac{d+1}{2})} \right)^{\frac{1}{2}}  \mathcal{D}_{K}(\mu, \nu).
\end{align*} 
In summary, this results in
$$ W_{1}(\mu, \nu)  
		\leq 
		c_d \left(\frac{2 \Gamma(\frac{d}{2})}{\sqrt{\pi}\Gamma(\frac{d+1}{2})}  \right)^{\frac{1}{2(d+1)}}  R^{\frac{2d+1}{2d+2}}
		\mathcal{D}_{K}(\mu, \nu)^{\frac{1}{d+1}}.
$$
\end{proof}

Under some additional assumptions, similar bounds have been considered in \cite{Chaifai2018} for the Coloumb kernel $K(x,y)=\|x-y\|^{2-d}$.

%---------------------------------------------------------------------------------
\section{Proof of Theorem~\ref{thm:sorting}}\label{proof:sorting}

\textbf{Interaction Energy:}
This part of the proof is similar to \citep[Sec.~3]{TSGSW2011}.
Using that $\sigma$ is a permutation and by reordering the terms in the double sum, we can rewrite the interaction energy by
\begin{align}
E(\zb x)&=-\frac{1}{2N^2}\sum_{i=1}^N\sum_{j=1}^N |x_i-x_j|=-\frac{1}{2N^2}\sum_{i=1}^N\sum_{j=1}^N|x_{\sigma(i)}-x_{\sigma(j)}|\\
&=-\frac{1}{N^2}\sum_{i=1}^N\sum_{j=i+1}^Nx_{\sigma(j)}-x_{\sigma(i)}=\sum_{i=1}^N \frac{N-(2i-1)}{N^2}x_{\sigma(i)}=\sum_{i=1}^N \frac{N-(2\sigma^{-1}(i)-1)}{N^2}x_{i}.
\end{align}
Since the $x_i$are pairwise disjoint,  the sorting permutation $\sigma$ is constant in a neighborhood of $\zb x$.
Hence, $E$ is differentiable with derivative 
\begin{align*}
\nabla_{x_i}E(\zb x)=\frac{N+1-2\sigma^{-1}(i)}{N^2}.
\end{align*}

\textbf{Potential Energy:}
For any $x\neq y\in\R$ it holds
\begin{align*}
\nabla_{x}|x-y|=\chi(x,y),\quad\text{where}\quad\chi(x,y)=\begin{cases}1,&$if $x>y,\\-1,&$if $x<y.\end{cases}
\end{align*}
Thus, we have that
\begin{align}
\nabla_{x_i}V(\zb x|\zb y)&=\frac1{MN}\sum_{j=1}^M\chi(x_i,y_j)\\&=\frac{1}{MN}\big(\#\{j\in \{1,...,M\}:y_j<x_i\}-\#\{j\in \{1,...,M\}:y_j>x_i\}\big)
\end{align}
Using that $\#\{j\in \{1,...,M\}:y_j>x_i\}=M-\#\{j\in \{1,...,M\}:y_j<x_i\}$ the above expression is equal to
\begin{align}
\frac{1}{MN}\big(2\,\#\{j\in \{1,...,M\}:y_j<x_i\} - M\Big)=\frac{2\,\#\{j\in \{1,...,M\}:y_j<x_i\} - M}{MN}.
\end{align}
\phantom{.}\hfill$\Box$

\section{Proof of Theorem~\ref{thm:convergence_rate}}\label{proof:concentration}
In this section, we derive error bounds for the stochastic estimators for the gradient of MMD as defined in \eqref{eq:def_grad_est1} for the Riesz kernel with $r=1$.
To this end, we employ concentration inequalities \citep{vershynin2018high}, which were generalized to vector-valued random variables in \citet{kohler2017sub}. 
We will need the following Lemma which is the Bernstein inequality and is stated in \citep[Lemma 18]{kohler2017sub}.

\begin{lemma}[Bernstein Inequality]\label{D:bernstein}
Let $X_1,...,X_P$ be independent random vectors with mean zero,$\Vert X_i \Vert \leq \mu$ and $\mathbb{E}[\Vert X_i\Vert_2^2] \leq \sigma^2$. Then it holds for $0 < t < \frac{\sigma^2}{\mu}$ that 
\begin{align*}
\mathbb{P}\Big[\Big\Vert \frac{1}{P} \sum_{i=1}^P X_i\Big\Vert_2 > t\Big] \leq \exp{\left(-\frac{P\ t^2}{8 \sigma^2}+\frac{1}{4}\right)}.
\end{align*}
\end{lemma}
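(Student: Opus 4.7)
The plan is to reduce this Hilbert-space Bernstein inequality to a concentration statement about the scalar random variable $Z \coloneqq \|S_P\|_2$, where $S_P \coloneqq \frac{1}{P}\sum_{i=1}^P X_i$, by combining a second-moment bound on $Z$ with a vector-valued Bernstein-type tail for $Z$ around its mean. The variance hypothesis $\mathbb{E}[\|X_i\|^2] \le \sigma^2$ must enter through the scale of $Z$, since the boundedness $\|X_i\| \le \mu$ alone would yield only a Hoeffding-type $\mu^2$-denominator. Expanding the square and using independence together with $\mathbb{E}[X_i] = 0$ gives $\mathbb{E}[\|S_P\|_2^2] \le \sigma^2/P$, hence $\mathbb{E}[\|S_P\|_2] \le \sigma/\sqrt{P}$ by Jensen's inequality; this is what eventually produces the $\sigma^2$ in the denominator of the final exponent.

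Next, I would invoke a vector-valued Bernstein concentration for $\|S_P\|_2$ around its mean. The cleanest route is through Pinelis's Hilbert-space martingale inequality applied to $M_k \coloneqq \sum_{i\le k} X_i$, whose increments are bounded by $\mu$ and whose predictable quadratic variation is at most $P\sigma^2$. An alternative is a decoupling/symmetrization argument: writing $\|S_P\|_2 = \langle S_P, S_P/\|S_P\|_2\rangle$ and replacing the data-dependent unit direction by an independent copy lets one apply scalar Bernstein to the projections $\langle X_i, u\rangle$, each of which is mean-zero with magnitude at most $\mu$ and variance at most $\sigma^2$. Either route produces a bound of the form $\mathbb{P}[\|S_P\|_2 - \mathbb{E}\|S_P\|_2 > s] \le \exp(-Ps^2/(c(\sigma^2 + \mu s)))$ for an absolute constant $c$.

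Finally, convert the deviation bound into the stated absolute-threshold bound by absorbing the mean: $\{\|S_P\|_2 > t\} \subseteq \{\|S_P\|_2 - \mathbb{E}\|S_P\|_2 > t - \sigma/\sqrt{P}\}$. Expanding $(t - \sigma/\sqrt{P})^2 = t^2 - 2t\sigma/\sqrt{P} + \sigma^2/P$ and using the hypothesis $t < \sigma^2/\mu$ (which ensures $\mu(t - \sigma/\sqrt{P}) \le \sigma^2$, so the Bernstein denominator stays within a constant multiple of $\sigma^2$) yields the leading term $-Pt^2/(8\sigma^2)$. The linear cross-term $t\sqrt{P}/\sigma$ is controlled via $\alpha\beta \le \tfrac12\alpha^2 + \tfrac12\beta^2$, which both loosens the exponent coefficient to $1/8$ and produces the stray $\exp(\tfrac{1}{4})$ prefactor in the stated bound.

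The main obstacle is the second step: scalar Bernstein does not apply termwise to the Euclidean norm of a vector sum, McDiarmid's bounded-differences approach yields only a Hoeffding-type $\mu^2$-dependence rather than the required $\sigma^2$-dependence, and a naive covering/union-bound over $\mathbb{S}^{d-1}$ would introduce a spurious dimension dependence. A genuinely vector-valued tool---Pinelis's inequality, or the decoupling argument above---is therefore essential to obtain the variance-driven form of the exponent.
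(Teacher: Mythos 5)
Your plan is essentially correct, but note first that the paper itself contains no proof of this lemma: it is quoted verbatim as Lemma~18 of \citet{kohler2017sub}, whose proof in turn rests on Gross's Hilbert-space vector Bernstein inequality, namely $\mathbb{P}\big[\|\sum_{i=1}^P X_i\| \ge \sqrt{V} + s\big] \le \exp\big(-s^2/(4V)\big)$ for $0\le s \le V/\mu$ with $V=\sum_i \mathbb{E}\|X_i\|^2 \le P\sigma^2$ (itself proved by a Pinelis-type martingale argument). Your outline reconstructs exactly this derivation: the second-moment step $\mathbb{E}\|S_P\|_2^2 \le \sigma^2/P$ is correct (cross terms vanish by independence and zero mean), the role of $t<\sigma^2/\mu$ in keeping the Bernstein denominator of order $\sigma^2$ is correctly identified, and absorbing the mean offset $\sigma/\sqrt{P}$ does produce precisely the advertised constants: with $s = Pt - \sqrt{P}\sigma$ one gets $s^2 \ge \tfrac12 P^2t^2 - P\sigma^2$, hence $-s^2/(4P\sigma^2) \le -Pt^2/(8\sigma^2) + \tfrac14$. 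Two caveats on the details. First, the Young inequality must be applied in weighted form, e.g.\ $\alpha\beta \le \tfrac14\alpha^2 + \beta^2$ with $\alpha = Pt$, $\beta=\sqrt{P}\sigma$; the unweighted $\alpha\beta\le\tfrac12\alpha^2+\tfrac12\beta^2$ with this natural choice cancels the entire main term $P^2t^2$. (Also handle the case $t \le \sigma/\sqrt{P}$ separately; there $Pt^2 \le \sigma^2$, so the stated bound exceeds $1$ and is vacuous, which likewise absorbs any prefactor $2$ from a two-sided Pinelis inequality.)

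Second, of your two proposed routes for the genuinely vector-valued step, only the Pinelis/Gross martingale route is sound. The decoupling sketch --- writing $\|S_P\|_2 = \langle S_P, S_P/\|S_P\|_2\rangle$ and ``replacing the data-dependent unit direction by an independent copy'' --- does not work as stated: for any \emph{fixed or independent} direction $u$, the projection $\langle S_P, u\rangle$ is typically smaller than $\|S_P\|_2$ by a dimension-dependent factor, and no elementary decoupling inequality lets you pass from the supremum over directions to a single independent direction without paying exactly the covering/union-bound cost you correctly identified as fatal. So the proof should be anchored on Pinelis's Bernstein inequality for Hilbert-space martingales (or Gross's theorem, which packages it with the $\sqrt{V}$ shift), after which your arithmetic goes through verbatim.
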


Now we can use Lemma~\ref{D:bernstein} to show convergence of the finite sum approximation to the exact gradient.

\begin{theorem}[Concentration Inequality]\label{thm:concentration_bound}
Let $x_1,...x_N,y_1,...,y_M \in \R^d$. Then, it holds 
\begin{align}
\mathbb{P}\Big[\|\tilde \nabla_P F_d (\zb x | \zb y)-\nabla F_d (\zb x | \zb y)\|>t\Big]
\leq\exp{\Big(-\frac{P\ t^2}{32\  (c_d+1)^2}+\frac{1}{4}\Big)}.
\end{align}
\end{theorem}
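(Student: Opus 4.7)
The plan is a direct application of the vector-valued Bernstein inequality from Lemma~\ref{D:bernstein} to the iid centered summands $Z_p \coloneqq Y_p - \E[Y_p] \in \R^{Nd}$, $p=1,\ldots,P$, where
$$
Y_p \coloneqq \bigl(c_d\,\partial_i F_1(\langle\xi_p,x_1\rangle,\ldots,\langle\xi_p,x_N\rangle \mid \langle\xi_p,y_1\rangle,\ldots,\langle\xi_p,y_M\rangle)\,\xi_p\bigr)_{i=1}^N.
$$
By the unbiasedness remark following \eqref{eq:def_grad_est2} we have $\E[Y_p] = \nabla F_d(\zb x|\zb y)$, so $\tilde\nabla_P F_d(\zb x|\zb y) - \nabla F_d(\zb x|\zb y) = \frac{1}{P}\sum_{p=1}^P Z_p$, and Lemma~\ref{D:bernstein} is directly applicable once we control $\|Z_p\|$ and $\E\|Z_p\|^2$ uniformly.

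The key step is the almost-sure estimate $\|Z_p\|\le 2(c_d+1)$. Since $\|\xi_p\|=1$, Theorem~\ref{thm:sorting} (applied for a.e.\ $\xi_p$, the exceptional set of coincident projected coordinates being a null set on $\mathbb{S}^{d-1}$ handled by the subgradient version noted after the theorem) gives $|\partial_i F_1|\le \frac{N-1}{N^2} + \frac{1}{N} < \frac{2}{N}$ for each $i$. Summing blockwise,
$\|Y_p\|^2 = c_d^2\sum_{i=1}^N(\partial_i F_1)^2 \le \frac{4 c_d^2}{N}\le 4c_d^2$, so $\|Y_p\|\le 2c_d$. An analogous calculation applied directly to $F_d$, whose partial derivatives $\nabla_{x_k}F_d$ are sums of at most $N-1+M$ unit vectors weighted by $\tfrac{1}{N^2}$ and $\tfrac{1}{NM}$, yields $\|\nabla_{x_k}F_d\|<\frac{2}{N}$ and hence $\|\nabla F_d\|\le 2/\sqrt{N}\le 2$. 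The triangle inequality then gives $\|Z_p\|\le 2c_d + 2 = 2(c_d+1)$, whence a fortiori $\E\|Z_p\|^2\le 4(c_d+1)^2$.

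Plugging $\mu = 2(c_d+1)$ and $\sigma^2 = 4(c_d+1)^2$ into Lemma~\ref{D:bernstein} (so that $\sigma^2/\mu = 2(c_d+1)$) yields the claimed bound for every $0 < t < 2(c_d+1)$. For $t\ge 2(c_d+1)$ the inequality is vacuous, since then the triangle inequality forces $\|\tfrac{1}{P}\sum_p Z_p\|\le 2(c_d+1)\le t$ deterministically and the left-hand probability is zero. There is no substantive obstacle; the only mild subtlety is the measure-zero $\xi_p$-event where $F_1$ fails to be classically differentiable after projection, which is absorbed by the subgradient remark immediately following Theorem~\ref{thm:sorting}. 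I note that the same argument would actually yield the sharper constant $\sigma^2 \le \tfrac{4c_d^2}{N}$ from the blockwise bound, but the looser choice $\sigma^2 = 4(c_d+1)^2$ is what matches the stated form of the inequality.
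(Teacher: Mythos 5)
Your proposal is correct and follows essentially the same route as the paper's proof: the same decomposition into i.i.d.\ centered $Nd$-dimensional vectors, the same uniform bounds $\|Z_p\|\le 2(c_d+1)$ and $\E\|Z_p\|^2\le 4(c_d+1)^2$ derived from Theorem~\ref{thm:sorting}, and the same application of Lemma~\ref{D:bernstein}. Your blockwise Euclidean estimate is in fact slightly sharper than the paper's triangle-inequality-over-blocks bound (yielding an extra factor $1/\sqrt{N}$ that you then discard to match the stated constant), and your explicit treatment of the restriction $t<\sigma^2/\mu$ and of the null set where $F_1$ is not differentiable are details the paper leaves implicit.
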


\begin{proof}
Let $\xi_1,...,\xi_P\sim\mathcal U_{\Sp^{d-1}}$ be the independent random variables from the definition of $\tilde \nabla_P$ in \eqref{eq:def_grad_est1}.
We set
\begin{align*}
X_{i,l}\coloneqq c_d\nabla_l F_1(\langle \xi_i,x_1\rangle,...,\langle \xi_i,x_N\rangle|\langle \xi_i,y_1\rangle,...,\langle \xi_i,y_M\rangle) \xi_i
\end{align*}
and define the $dN$-dimensional random vector $X_i=(X_{i,1},\cdots X_{i,N})$.
Then, we have by \eqref{eq:derivative_MMD} that 
\begin{align*}
\mathbb{E}[X_i] &=   \nabla F_d (\zb x | \zb y) 
=\big(\nabla_{x_1} F_d (\zb x | \zb y),\cdots,\nabla_{x_N} F_d (\zb x | \zb y) \big).
\end{align*}
By Theorem~\ref{thm:sorting}, we know that $\|X_{i,l}\|_2\leq \frac{2c_d}{N}$, and by \eqref{eq:discrete_MMD} it holds
\begin{align*}
\|\mathbb E \big[ X_{i,l}\big]\|_2=\|\nabla_{x_l}F_d (\zb x | \zb y) \|_2 \leq \frac{2}{N}.
\end{align*}
Let $ \tilde{X}_i = X_i - \mathbb{E}[X_i]$. Then it holds
\[
    \| \tilde{X}_i\|_2 
    \leq \sum_{l=1}^N \|\tilde{X}_{i,l}\|_2 \leq \sum_{l=1}^N \frac{2c_d+2}{N} = 2c_d + 2
\]
and thus $\mathbb{E} \big[\|\tilde{X}_i\|_{2}^2\big] \leq 4(c_d+1)^2$.
Since we have $\mathbb{E}[\tilde{X}_i] = 0$ for all $i=1,...,P$, we can apply Lemma~\ref{D:bernstein} and obtain 
\begin{align*}
\mathbb{P}\Big[\Big\Vert \frac{1}{P} \sum_{i=1}^P X_i - \nabla F_d (\zb x |\zb y) \Big\Vert >t\Big] 
&= \mathbb{P}\Big[\Big\Vert \frac{1}{P} \sum_{i=1}^P \tilde{X}_i \Big\Vert >t\Big]
\leq \exp{\left(-\frac{P\ t^2}{32\ (c_d+1)^2}+\frac{1}{4}\right)}.
\end{align*}
Since we have by definition that
\begin{align*}
\frac{1}{P}\sum_{i=1}^P X_i=\tilde \nabla_P F_d (\zb x | \zb y)
\end{align*}
this yields the assertion.
\end{proof}
Finally we can draw a corollary which immediately shows Theorem \ref{thm:convergence_rate}.

\begin{corollary}[Error Bound for Stochastic Gradients]\label{cor:concentration}
For $x_1,...,x_N,y_1,...,y_M\in\R^d$, it holds
\begin{align*}
\E[\|\tilde\nabla_P F_d(\zb x|\zb y)-\nabla F_d(\zb x|\zb y)\|]\leq \frac{\exp(1/4)\sqrt{32}\pi(\sqrt{d}+1)}{2\sqrt{2\ P}}.
\end{align*}
\end{corollary}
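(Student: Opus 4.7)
The plan is to obtain the expectation bound from the concentration inequality in Theorem~\ref{thm:concentration_bound} by integrating tail probabilities, and then to control the dimensional constant $c_d$ by a clean function of $\sqrt{d}$.

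First I would set $Y \coloneqq \|\tilde\nabla_P F_d(\zb x|\zb y)-\nabla F_d(\zb x|\zb y)\|$ and use the standard layer-cake identity
\begin{equation*}
\E[Y] = \int_0^\infty \mathbb{P}[Y > t]\, \d t \leq \int_0^\infty \exp\!\left(-\frac{Pt^2}{32(c_d+1)^2}+\tfrac14\right) \d t,
\end{equation*}
where in the inequality I plug in Theorem~\ref{thm:concentration_bound} (the bound is vacuous for small $t$ since it can exceed $1$, but it is still an upper bound). This is a Gaussian integral: factoring the constant $\exp(1/4)$ out and using $\int_0^\infty e^{-at^2}\d t = \tfrac12\sqrt{\pi/a}$ with $a = P/(32(c_d+1)^2)$ yields
\begin{equation*}
\E[Y] \leq \frac{\exp(1/4)(c_d+1)\sqrt{32\pi}}{2\sqrt{P}}.
\end{equation*}

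The remaining step, which is really the only non-mechanical part, is to convert the factor $c_d+1$ into the claimed $\sqrt{\pi/2}\,(\sqrt{d}+1)$. Recall from Theorem~\ref{sliced:unsliced} that $c_d = \sqrt{\pi}\,\Gamma(\tfrac{d+1}{2})/\Gamma(\tfrac{d}{2})$. I would invoke the classical Gautschi-type inequality $\Gamma(x+\tfrac12)/\Gamma(x) \leq \sqrt{x}$ for $x \geq 1/2$ (with $x = d/2$), giving $c_d \leq \sqrt{\pi d/2} = \sqrt{\pi/2}\,\sqrt{d}$. Since $\sqrt{\pi/2} \geq 1$, this immediately yields $c_d + 1 \leq \sqrt{\pi/2}\,(\sqrt{d}+1)$.

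Substituting this estimate back into the Gaussian integral bound gives $(c_d+1)\sqrt{32\pi} \leq \sqrt{\pi/2}\,(\sqrt{d}+1)\sqrt{32\pi} = (\sqrt{d}+1)\sqrt{16\pi^2} \cdot \sqrt{2}/\sqrt{2}$; rearranging as $\sqrt{32}\,\pi/\sqrt{2}$ produces exactly the stated constant $\exp(1/4)\sqrt{32}\,\pi(\sqrt{d}+1)/(2\sqrt{2P})$. The main obstacle is really just locating the sharp form of the Gamma-ratio bound; once that is in hand, everything else is bookkeeping of constants, and the $O(\sqrt{d/P})$ rate in Theorem~\ref{thm:convergence_rate} is an immediate consequence.
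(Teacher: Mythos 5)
Your proposal is correct and follows essentially the same route as the paper: the layer-cake identity applied to the tail bound of Theorem~\ref{thm:concentration_bound}, the Gaussian integral giving $\exp(1/4)\sqrt{32\pi}(c_d+1)/(2\sqrt{P})$, and then the estimate $c_d\le\sqrt{\pi d/2}$ combined with $\sqrt{\pi/2}\ge 1$ to absorb the $+1$. The only cosmetic difference is that the paper invokes Kershaw's inequality $\Gamma(\frac{d+1}{2})/\Gamma(\frac{d}{2})\le\sqrt{\frac{d}{2}+\sqrt{3/4}-1}$ where you use the Gautschi--Wendel bound $\Gamma(x+\frac12)/\Gamma(x)\le\sqrt{x}$; both yield the same $c_d\le\sqrt{\pi/2}\,\sqrt{d}$ actually needed.
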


\begin{proof}
Denote by $X$ the random variable
\begin{align*}
X=\|\tilde\nabla_P F_d(\zb x|\zb x)-\nabla F_d(\zb x|\zb y)\|.
\end{align*}
Then, we have by Theorem~\ref{thm:concentration_bound} that
\begin{align*}
\mathbb P[X>t]\leq \exp{\left(-\frac{P\ t^2}{32\ (c_d+1)^2}+\frac{1}{4}\right)}.
\end{align*}
Thus, we obtain
\begin{align*}
\E[X]=\int_0^\infty \mathbb P[X>t]\d t\leq \exp(1/4)\int_0^\infty \exp{\left(-\frac{P\ t^2}{32\ (c_d+1)^2}\right)}\d t=\frac{\exp(1/4)\sqrt{32\pi}(c_d+1)}{2\sqrt{P}},
\end{align*}
where the last step follows from the identity $\int_0^\infty \exp(-t^2)\d t=\frac{\sqrt{\pi}}{2}$.

Now we proceed to bound the constant $c_d$ in the dimensions.
By  Theorem~\ref{sliced:unsliced} we have that 
$c_{d} =  \frac{\sqrt{\pi}\Gamma(\frac{d+1}{2})}{\Gamma(\frac{d}{2})}$.
Now the claim follows from the bound $\frac{\Gamma(\frac{d+1}{2})}{\Gamma(\frac{d}{2})} \leq \sqrt{\frac{d}{2}+\sqrt{\frac{3}{4}}-1}$ proven by \cite{Kershaw83}.
\end{proof}

\section{Comparison of Different Kernels in MMD} \label{app:motivation_riesz}

Next we compare the MMD flows of the Riesz kernels
with those of the positive definite kernels
\begin{align}
K_{\text{G}} (x,y) &\coloneqq \exp \Big( -\frac{\Vert x - y \Vert^2}{2 \sigma^2} \Big) 
\quad \text{(Gaussian)},
\\
K_{\text{IM}} (x,y) &\coloneqq \frac{1}{\sqrt{\Vert x - y \Vert^2 + c}}
\quad \text{(Inverse Multiquadric)},\\
K_{\text{L}} (x,y) &\coloneqq \exp \Big( -\frac{\Vert x - y \Vert}{\sigma} \Big) \quad \text{(Laplacian)}.
\end{align}
The target distribution is defined as the uniform distribution on three non-overlapping circles and the initial particles are drawn from a Gaussian distribution with standard deviation $0.01$, compare \citet{GAG2021}.
We recognize  in Figures \ref{fig:different_kernels1} 
and \ref{fig:different_kernels2}
that 
in contrast to the Riesz kernel, the other MMD flows
\begin{itemize}
 \item 
 heavily depend on the parameters
$\sigma$ and $c$ (stability against parameter choice);
\item cannot perfectly recover the uniform distribution;
zoom into the middle right circles to see that small blue parts are not covered (approximation of target distribution).
\end{itemize}

Moreover, in contrast to the other kernels, the Riesz kernel with $r=1$ is positively homogeneous such that the MMD flow is equivariant against scalings of initial and target measure.

Finally, it is interesting that the Riesz kernel 
is related to the Brownian motion by the following remark, see also \citet{MD2023} for the more general fractional Brownian motion.

\begin{remark}\label{rem:riesz_BM}
In the one-dimensional case, the extended Riesz kernel with $r=1$ reads as
$$
K(x,y)=-|x-y|+|x|+|y|=2\min(x,y),
$$
which is the twice the covariance kernel of the Brownian motion. More precisely, let $(W_t)_{t>0}$ be a Brownian motion and $s,t>0$. Then, it holds
$$
\mathrm{Cov}(W_s,W_t)=\min(s,t)=\frac12K(x,y).
$$
\end{remark}

\begin{figure*}[t!]
\centering
Gaussian kernel
\begin{minipage}{\linewidth}
\rotatebox{90}{
\begin{minipage}{0.12\linewidth}
\centering
$\sigma^2=0.05$
\end{minipage}
}
\begin{subfigure}[t]{.23\textwidth}
\includegraphics[width=\linewidth]{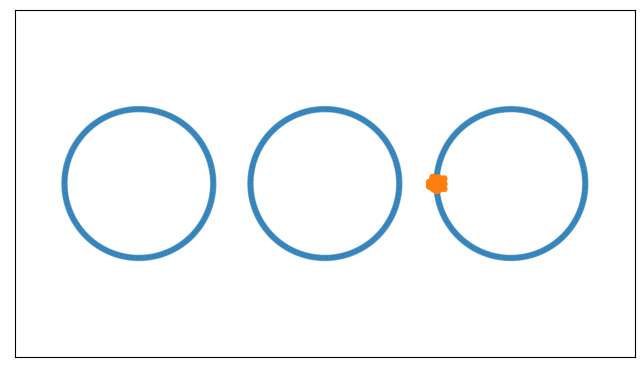}
\end{subfigure}
\hfill
\begin{subfigure}[t]{.23\textwidth}
\includegraphics[width=\linewidth]{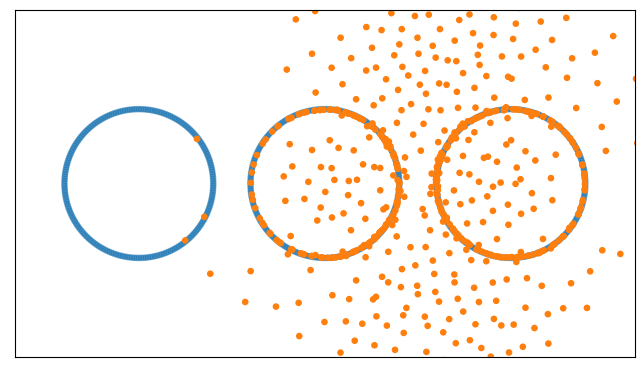}
\end{subfigure}
\hfill
\begin{subfigure}[t]{.23\textwidth}
\includegraphics[width=\linewidth]{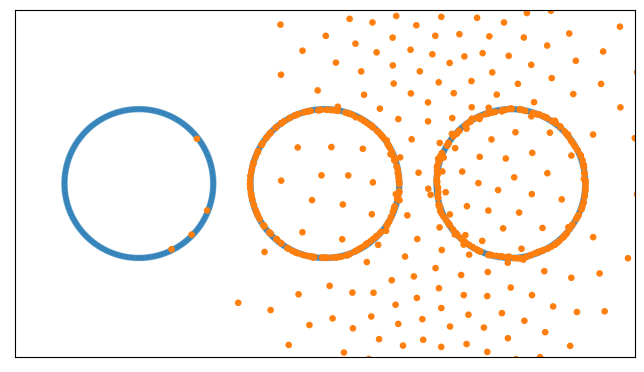}
\end{subfigure}
\hfill
\begin{subfigure}[t]{.23\textwidth}
\includegraphics[width=\linewidth]{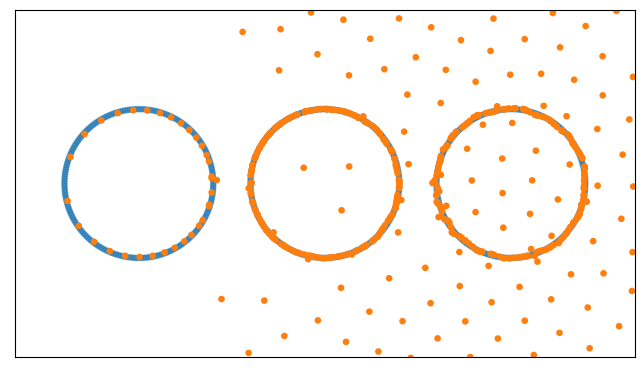}
\end{subfigure}
\end{minipage}

\begin{minipage}{\linewidth}
\rotatebox{90}{
\begin{minipage}{0.125\linewidth}
\centering
$\sigma^2=0.1$
\end{minipage}
}
\begin{subfigure}[t]{.23\textwidth}
\includegraphics[width=\linewidth]{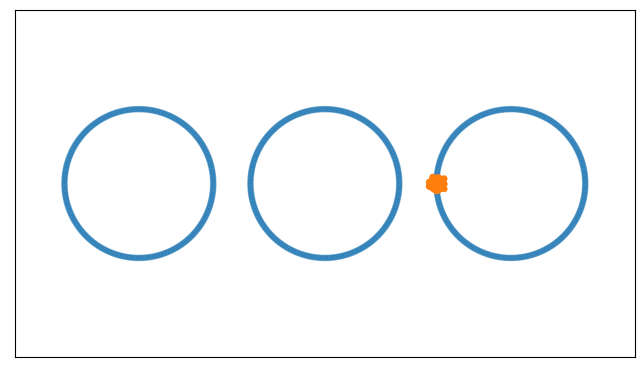}
\end{subfigure}
\hfill
\begin{subfigure}[t]{.23\textwidth}
\includegraphics[width=\linewidth]{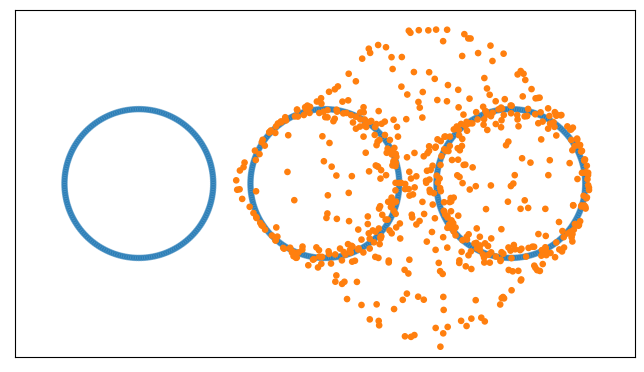}
\end{subfigure}
\hfill
\begin{subfigure}[t]{.23\textwidth}
\includegraphics[width=\linewidth]{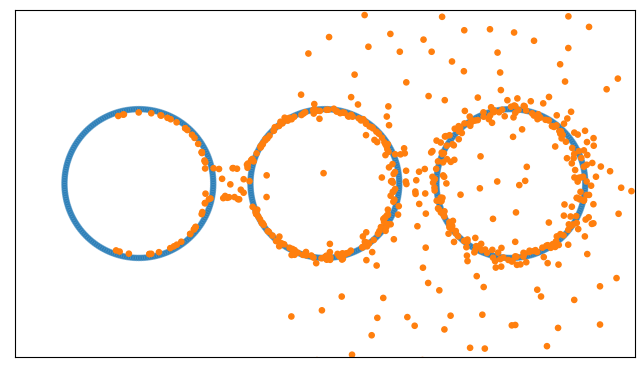}
\end{subfigure}
\hfill
\begin{subfigure}[t]{.23\textwidth}
\includegraphics[width=\linewidth]{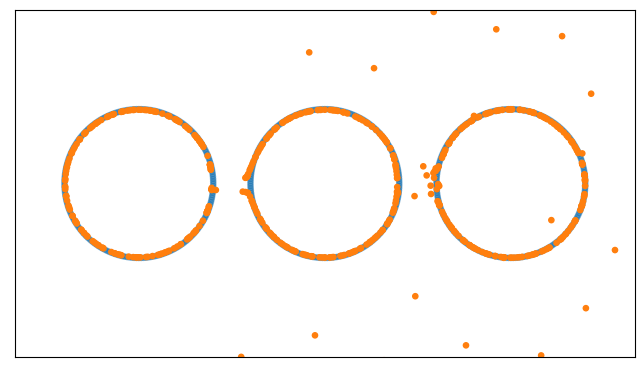}
\end{subfigure}
\end{minipage}

\begin{minipage}{\linewidth}
\rotatebox{90}{
\begin{minipage}{0.125\linewidth}
\centering
$\sigma^2=0.5$
\end{minipage}
}
\begin{subfigure}[t]{.23\textwidth}
\includegraphics[width=\linewidth]{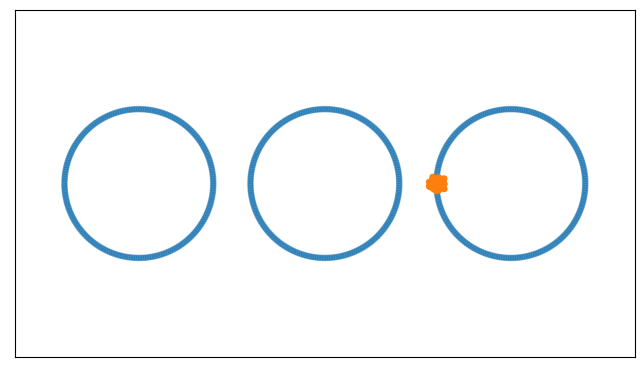}
\caption*{T=0}
\end{subfigure}
\hfill
\begin{subfigure}[t]{.23\textwidth}
\includegraphics[width=\linewidth]{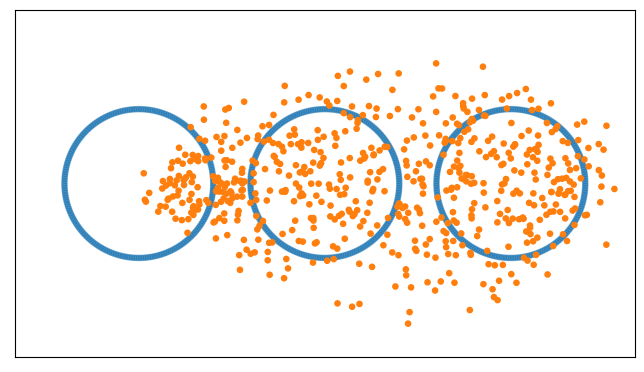}
\caption*{T=2}
\end{subfigure}
\hfill
\begin{subfigure}[t]{.23\textwidth}
\includegraphics[width=\linewidth]{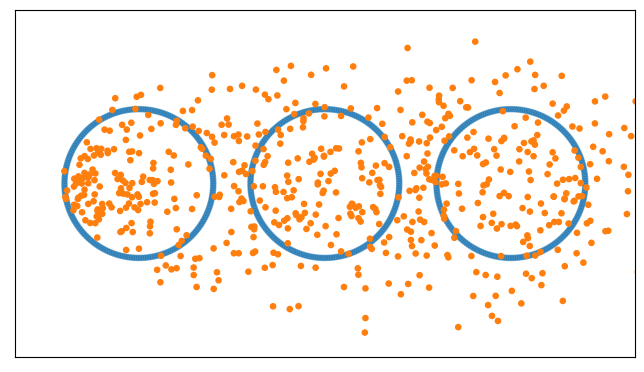}
\caption*{T=10}
\end{subfigure}
\hfill
\begin{subfigure}[t]{.23\textwidth}
\includegraphics[width=\linewidth]{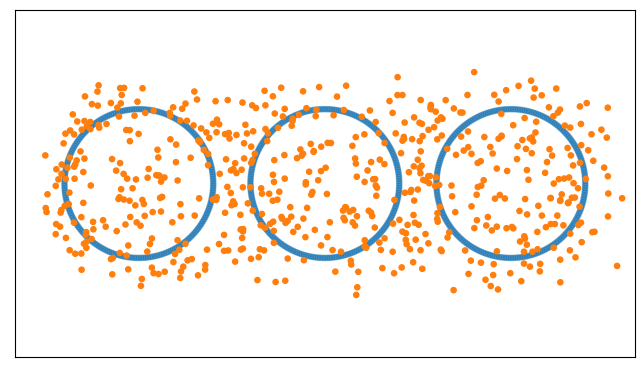}
\caption*{T=100}
\end{subfigure}
\end{minipage}

\vspace{0.3cm}
\centering
Inverse multiquadric kernel

\begin{minipage}{\linewidth}
\rotatebox{90}{
\begin{minipage}{0.125\linewidth}
\centering
$c = 0.01$
\end{minipage}
}
\begin{subfigure}[t]{.23\textwidth}
\includegraphics[width=\linewidth]{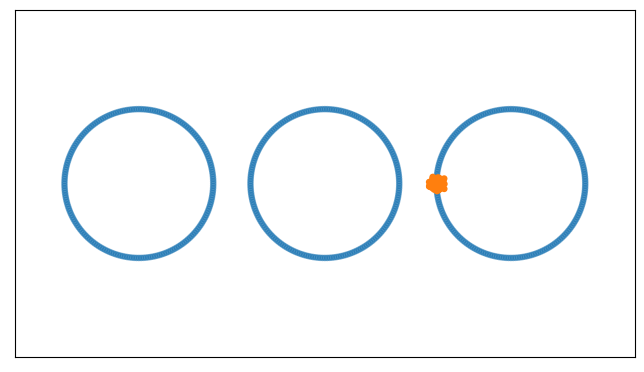}
\end{subfigure}
\hfill
\begin{subfigure}[t]{.23\textwidth}
\includegraphics[width=\linewidth]{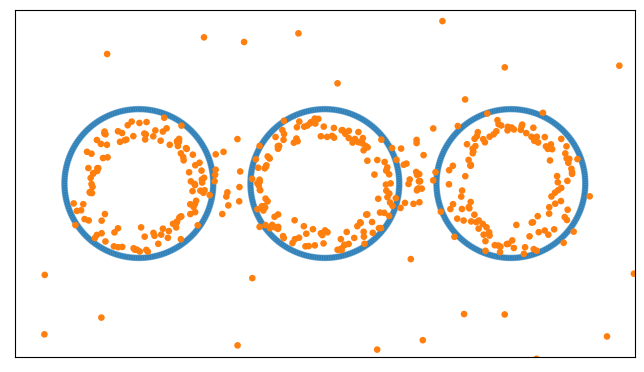}
\end{subfigure}
\hfill
\begin{subfigure}[t]{.23\textwidth}
\includegraphics[width=\linewidth]{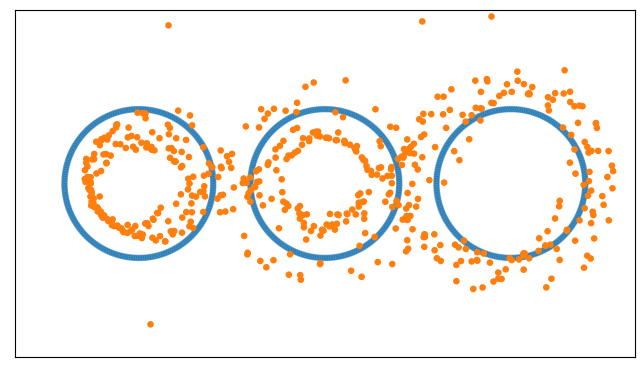}
\end{subfigure}
\hfill
\begin{subfigure}[t]{.23\textwidth}
\includegraphics[width=\linewidth]{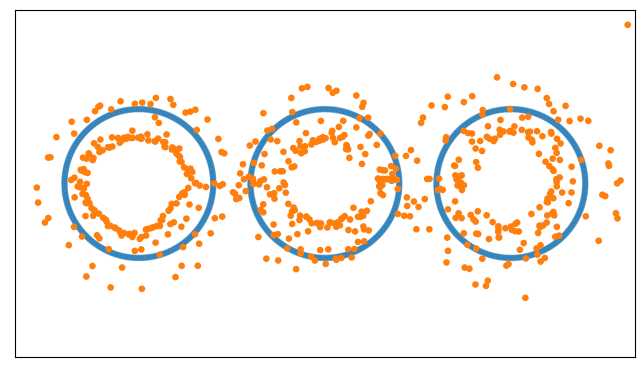}
\end{subfigure}
\end{minipage}

\begin{minipage}{\linewidth}
\rotatebox{90}{
\begin{minipage}{0.125\linewidth}
\centering
$c = 0.05$
\end{minipage}
}
\begin{subfigure}[t]{.23\textwidth}
\includegraphics[width=\linewidth]{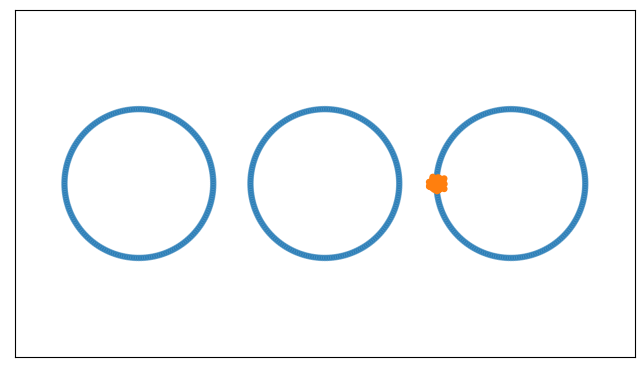}
\end{subfigure}
\hfill
\begin{subfigure}[t]{.23\textwidth}
\includegraphics[width=\linewidth]{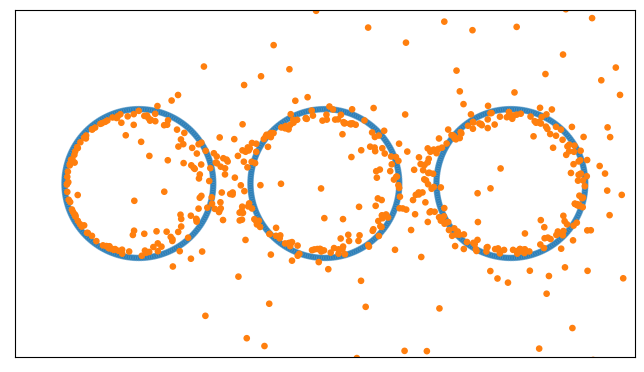}
\end{subfigure}
\hfill
\begin{subfigure}[t]{.23\textwidth}
\includegraphics[width=\linewidth]{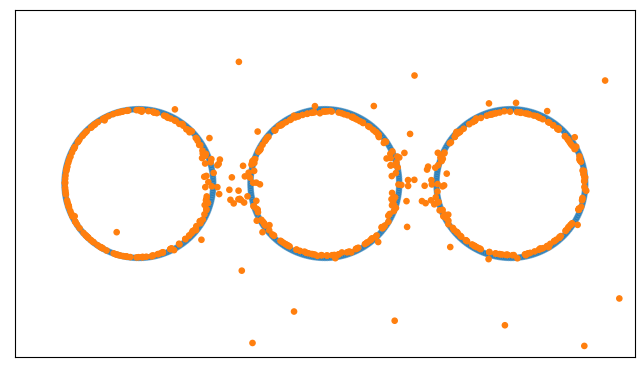}
\end{subfigure}
\hfill
\begin{subfigure}[t]{.23\textwidth}
\includegraphics[width=\linewidth]{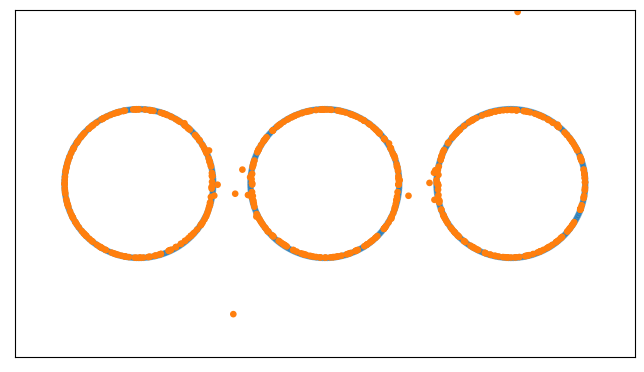}
\end{subfigure}
\end{minipage}

\begin{minipage}{\linewidth}
\rotatebox{90}{
\begin{minipage}{0.125\linewidth}
\centering
$c = 0.5$
\end{minipage}
}
\begin{subfigure}[t]{.23\textwidth}
\includegraphics[width=\linewidth]{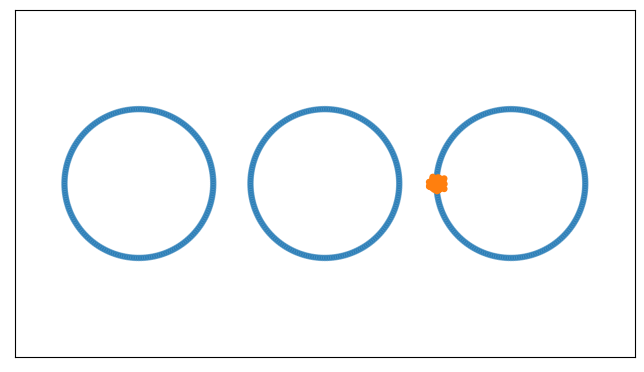}
\caption*{T=0}
\end{subfigure}
\hfill
\begin{subfigure}[t]{.23\textwidth}
\includegraphics[width=\linewidth]{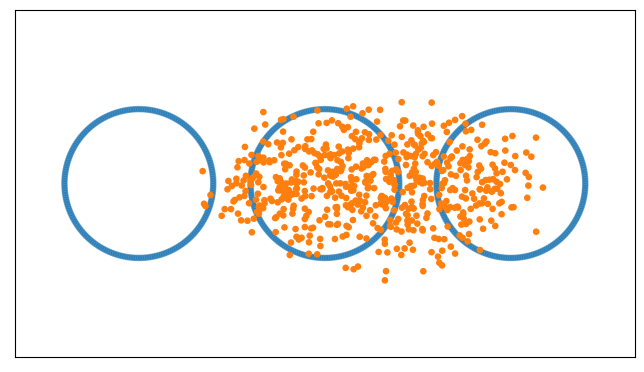}
\caption*{T=2}
\end{subfigure}
\hfill
\begin{subfigure}[t]{.23\textwidth}
\includegraphics[width=\linewidth]{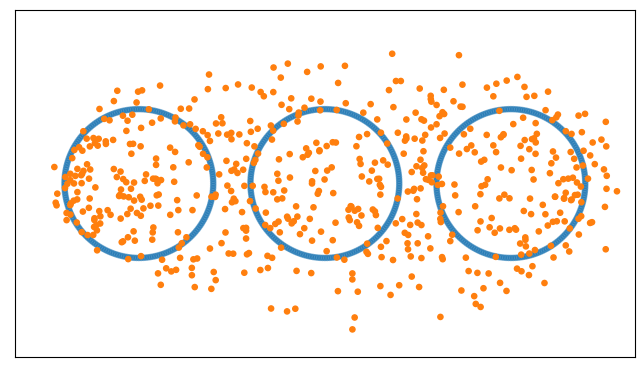}
\caption*{T=10}
\end{subfigure}
\hfill
\begin{subfigure}[t]{.23\textwidth}
\includegraphics[width=\linewidth]{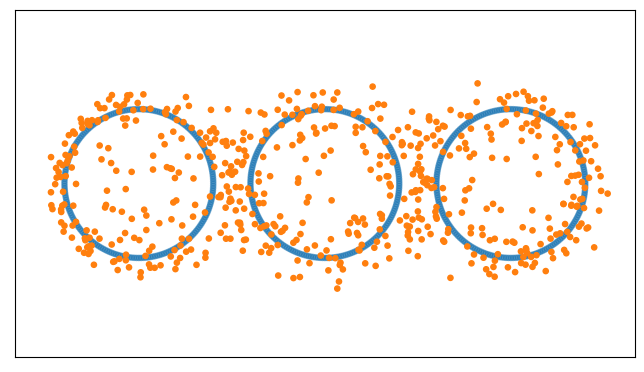}
\caption*{T=100}
\end{subfigure}
\end{minipage}
\vspace{0.3cm}
\centering
\caption{Comparison of the MMD flow with Gaussian kernel (top) and inverse multiquadric kernel (bottom) for different hyperparameters.} \label{fig:different_kernels1}
\end{figure*}

\begin{figure*}[t!]
\centering
Laplace kernel
\begin{minipage}{\linewidth}
\rotatebox{90}{
\begin{minipage}{0.125\linewidth}
\centering
$\sigma^2 = 0.01$
\end{minipage}
}
\begin{subfigure}[t]{.23\textwidth}
\includegraphics[width=\linewidth]{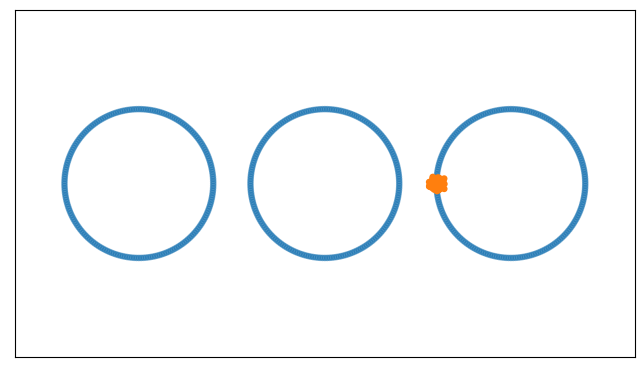}
\end{subfigure}
\hfill
\begin{subfigure}[t]{.23\textwidth}
\includegraphics[width=\linewidth]{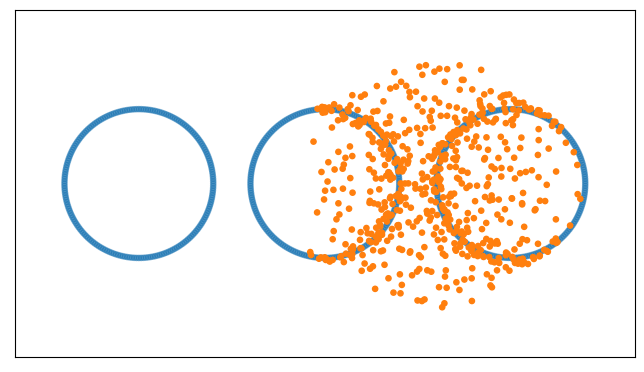}
\end{subfigure}
\hfill
\begin{subfigure}[t]{.23\textwidth}
\includegraphics[width=\linewidth]{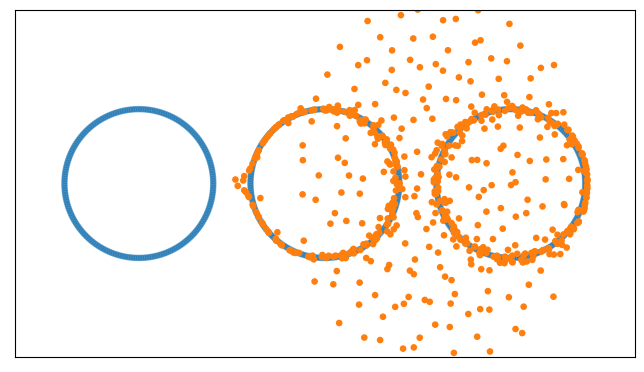}
\end{subfigure}
\hfill
\begin{subfigure}[t]{.23\textwidth}
\includegraphics[width=\linewidth]{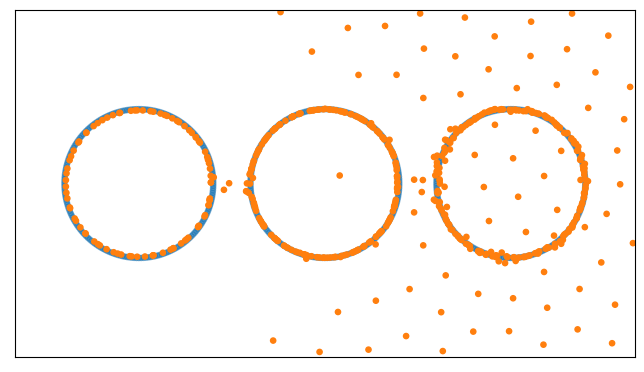}
\end{subfigure}
\end{minipage}

\begin{minipage}{\linewidth}
\rotatebox{90}{
\begin{minipage}{0.125\linewidth}
\centering
$\sigma^2 = 0.02$
\end{minipage}
}
\begin{subfigure}[t]{.23\textwidth}
\includegraphics[width=\linewidth]{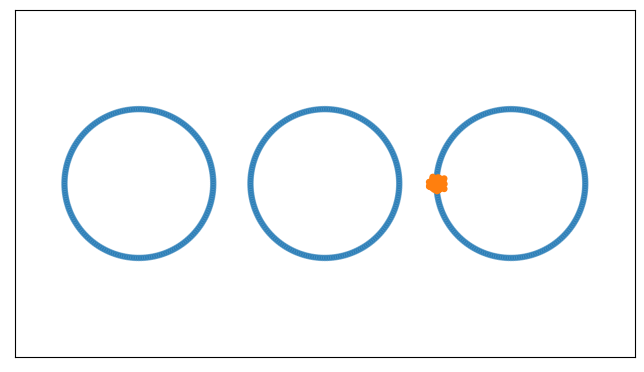}
\end{subfigure}
\hfill
\begin{subfigure}[t]{.23\textwidth}
\includegraphics[width=\linewidth]{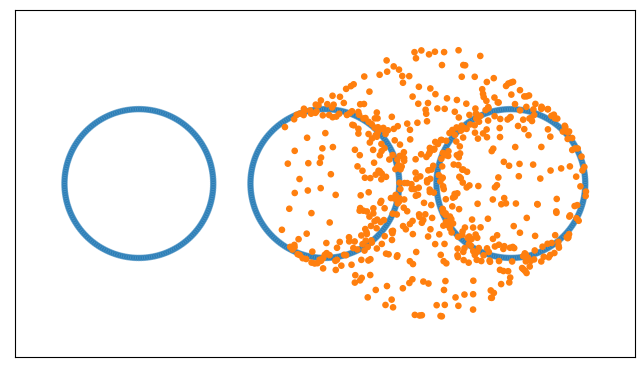}
\end{subfigure}
\hfill
\begin{subfigure}[t]{.23\textwidth}
\includegraphics[width=\linewidth]{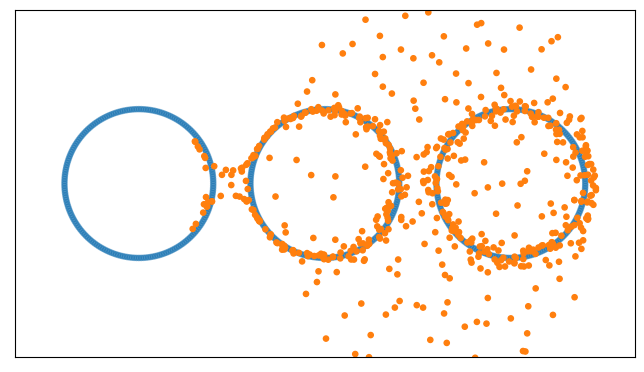}
\end{subfigure}
\hfill
\begin{subfigure}[t]{.23\textwidth}
\includegraphics[width=\linewidth]{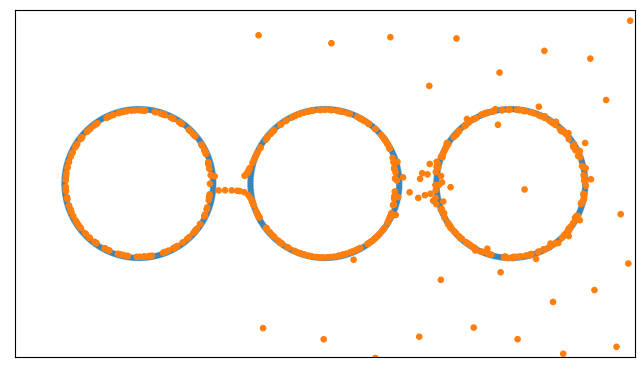}
\end{subfigure}
\end{minipage}

\begin{minipage}{\linewidth}
\rotatebox{90}{
\begin{minipage}{0.125\linewidth}
\centering
$\sigma^2 = 0.1$
\end{minipage}
}
\begin{subfigure}[t]{.23\textwidth}
\includegraphics[width=\linewidth]{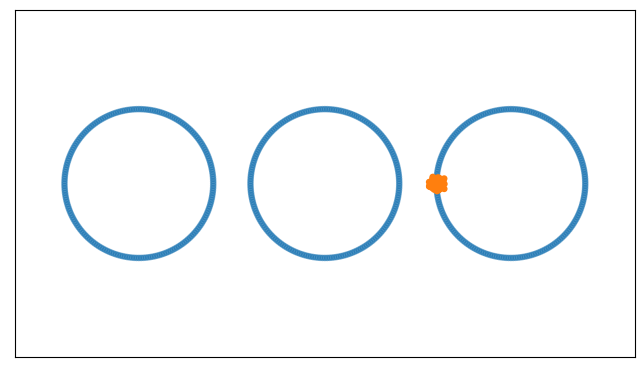}
\caption*{T=0}
\end{subfigure}
\hfill
\begin{subfigure}[t]{.23\textwidth}
\includegraphics[width=\linewidth]{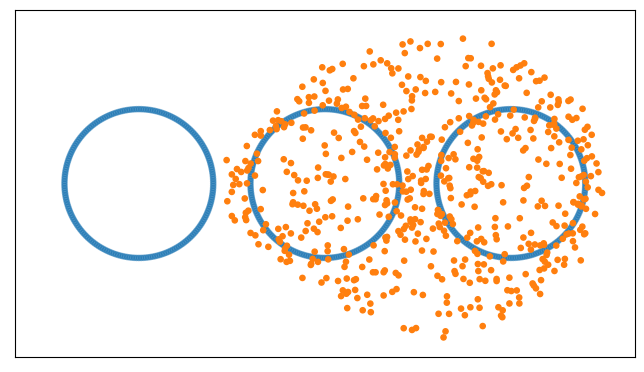}
\caption*{T=2}
\end{subfigure}
\hfill
\begin{subfigure}[t]{.23\textwidth}
\includegraphics[width=\linewidth]{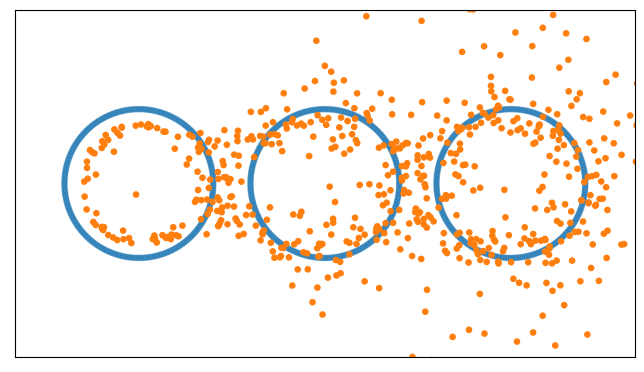}
\caption*{T=10}
\end{subfigure}
\hfill
\begin{subfigure}[t]{.23\textwidth}
\includegraphics[width=\linewidth]{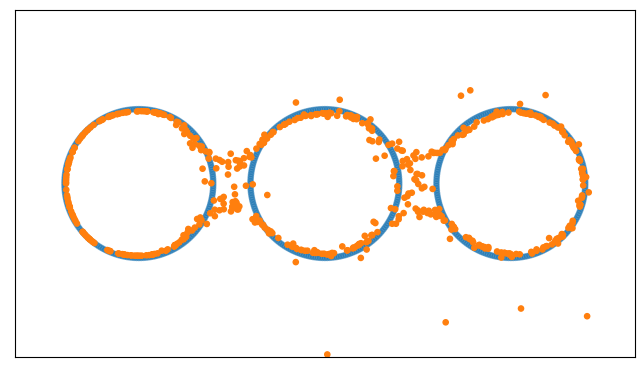}
\caption*{T=100}
\end{subfigure}
\end{minipage}

\centering
Riesz kernel
\begin{minipage}{\linewidth}
\rotatebox{90}{
\begin{minipage}{0.125\linewidth}
\centering
$r=0.5$
\end{minipage}
}
\begin{subfigure}[t]{.23\textwidth}
\includegraphics[width=\linewidth]{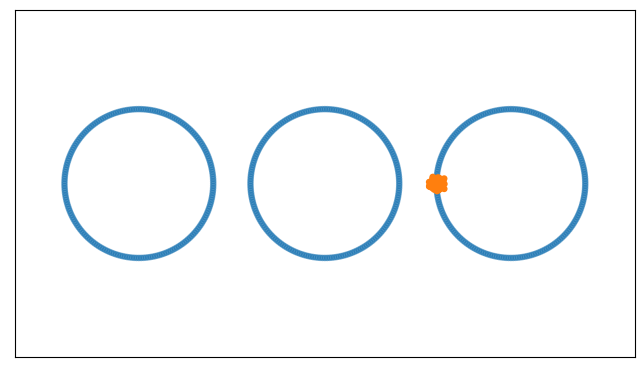}
\end{subfigure}
\hfill
\begin{subfigure}[t]{.23\textwidth}
\includegraphics[width=\linewidth]{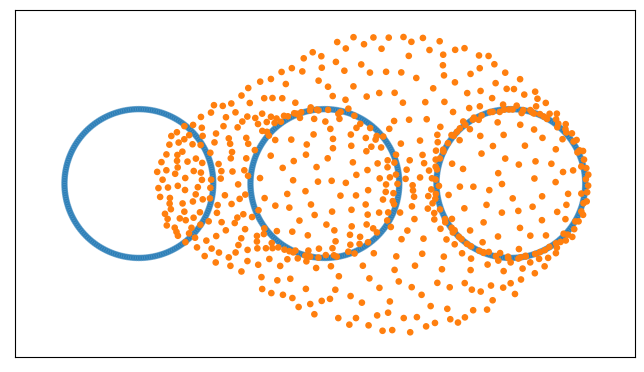}
\end{subfigure}
\hfill
\begin{subfigure}[t]{.23\textwidth}
\includegraphics[width=\linewidth]{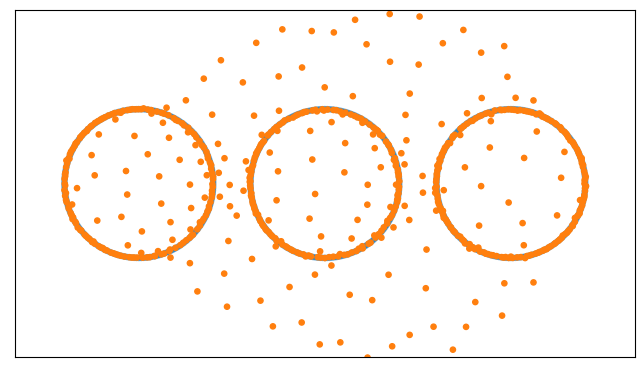}
\end{subfigure}
\hfill
\begin{subfigure}[t]{.23\textwidth}
\includegraphics[width=\linewidth]{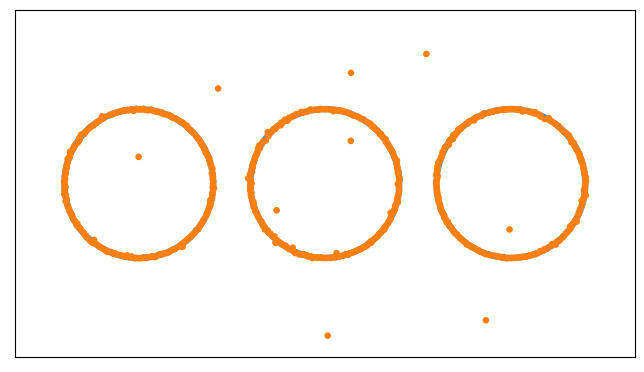}
\end{subfigure}
\end{minipage}

\begin{minipage}{\linewidth}
\rotatebox{90}{
\begin{minipage}{0.125\linewidth}
\centering
$r=1.0$
\end{minipage}
}
\begin{subfigure}[t]{.23\textwidth}
\includegraphics[width=\linewidth]{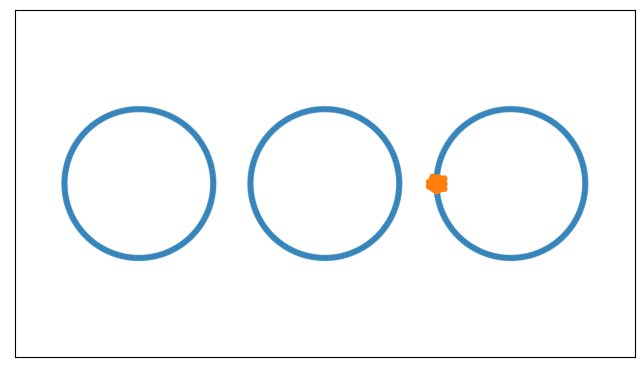}
\end{subfigure}
\hfill
\begin{subfigure}[t]{.23\textwidth}
\includegraphics[width=\linewidth]{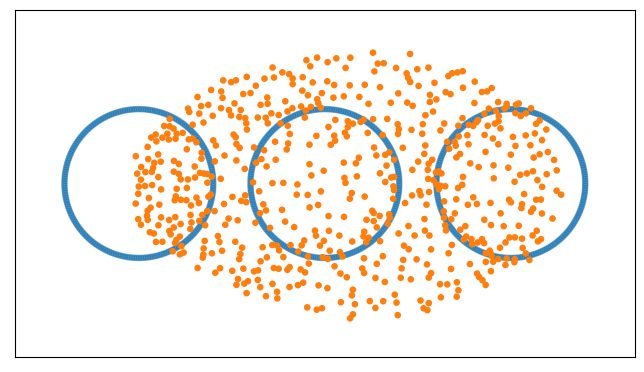}
\end{subfigure}
\hfill
\begin{subfigure}[t]{.23\textwidth}
\includegraphics[width=\linewidth]{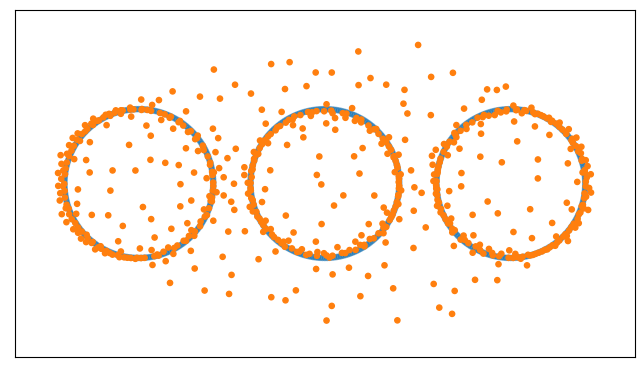}
\end{subfigure}
\hfill
\begin{subfigure}[t]{.23\textwidth}
\includegraphics[width=\linewidth]{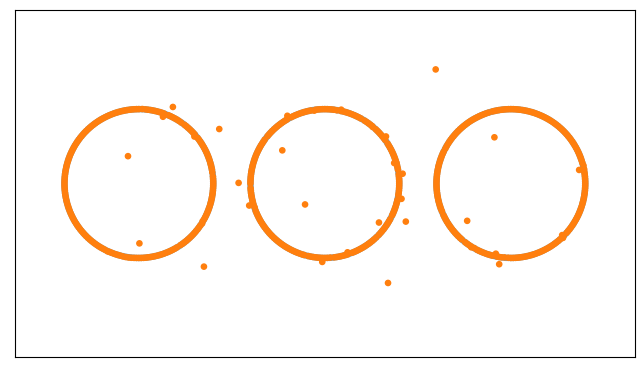}
\end{subfigure}
\end{minipage}

\begin{minipage}{\linewidth}
\rotatebox{90}{
\begin{minipage}{0.125\linewidth}
\centering
$r=1.5$
\end{minipage}
}
\begin{subfigure}[t]{.23\textwidth}
\includegraphics[width=\linewidth]{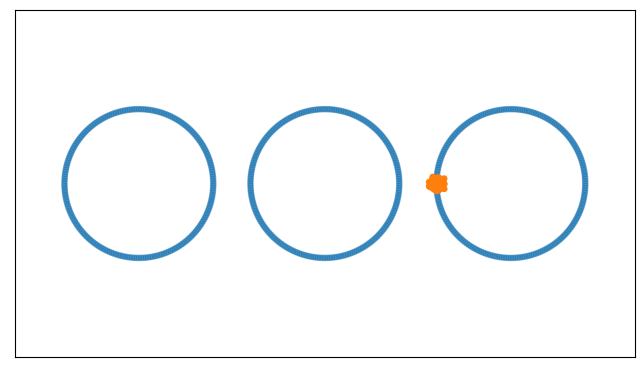}
\caption*{T=0}
\end{subfigure}
\hfill
\begin{subfigure}[t]{.23\textwidth}
\includegraphics[width=\linewidth]{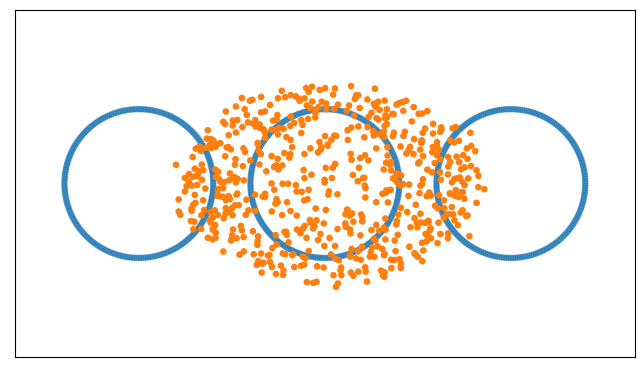}
\caption*{T=2}
\end{subfigure}
\hfill
\begin{subfigure}[t]{.23\textwidth}
\includegraphics[width=\linewidth]{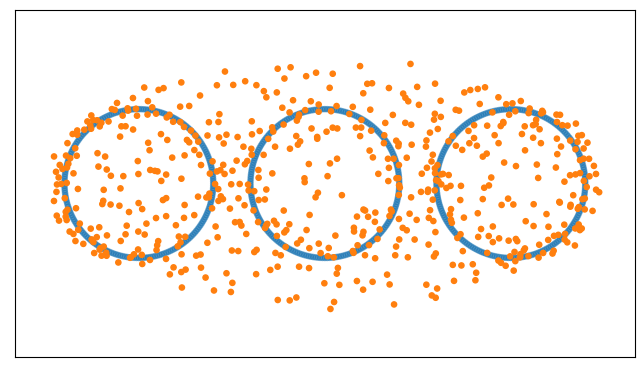}
\caption*{T=10}
\end{subfigure}
\hfill
\begin{subfigure}[t]{.23\textwidth}
\includegraphics[width=\linewidth]{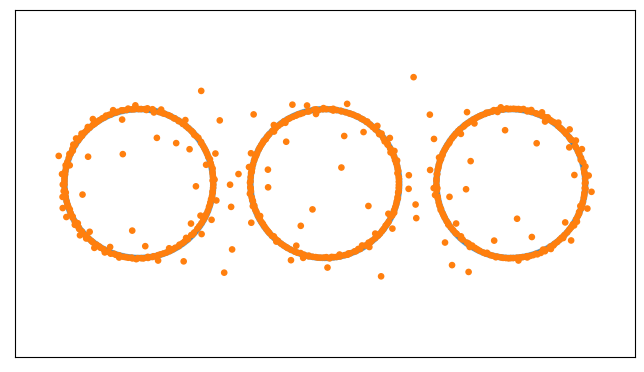}
\caption*{T=100}
\end{subfigure}
\end{minipage}
\caption{Comparison of the MMD flow with Laplacian kernel (top) and Riesz kernel (bottom) for different hyperparameters.} \label{fig:different_kernels2}
\end{figure*}

\section{Training Algorithm of the Generative Sliced MMD Flow} \label{app:training_algorithm}

In Algorithm~\ref{alg:training_gen_MMD_flows} we state the detailed training algorithm of our proposed method.

\begin{algorithm}[t]
\begin{algorithmic}
\State \textbf{Input:} Independent initial samples $x_1^{(0)},...,x_N^{(0)}$ from $\mu_0$, momentum parameters $m_l\in[0,1)$ for $l=1,...,L$.
\State Initialize $(v_1,...,v_N)=0$.
\For{$l=1,...,L$}
\State - Set $(\tilde x_1^{(0)},...,\tilde x_N^{(0)})=(x_1^{(l-1)},...,x_N^{(l-1)})$.
\State - Simulate $T_l$ steps of the (momentum) MMD flow:
\For{$t=1,...,T_l$}
\State - Update $v$ by
\begin{align*}
(v_1,...,v_N)\leftarrow \nabla F_d(\tilde x_1^{(t-1)},...,\tilde x_N^{(t-1)}|y_1,...,y_M)+m_l (v_1,...,v_N)
\end{align*}
\State - Update the flow samples:
\begin{align*}
(\tilde x_1^{(t)},...,\tilde x_N^{(t)})=(\tilde x_1^{(t-1)},...,\tilde x_N^{(t-1)})-\tau N\ (v_1,...,v_N)
\end{align*}
\EndFor
\State - Train $\Phi_l$ such that $\tilde x^{(T_l)}\approx \tilde x_i^{(0)}-\Phi_l(\tilde x_i^{(0)}) $ by minimizing the loss
\begin{align*}
\mathcal L(\theta_l)=\frac1N\sum_{i=1}^N \|\Phi_l(\tilde x_i^{(0)})-(\tilde x_i^{(0)}-\tilde x_i^{(T_l)})\|^2.
\end{align*}
\State - Set $(x_1^{(l)},...,x_N^{(l)})=(x_1^{(l-1)},...,x_N^{(l-1)})-(\Phi_l(x_1^{(l-1)}),...,\Phi_l(x_N^{(l-1)}))$.
\EndFor
\end{algorithmic}
\caption{Training of generative MMD flows}
\label{alg:training_gen_MMD_flows}
\end{algorithm}

\section{Ablation Study}
We consider the FID for different number of networks and different number of projections. We run the same experiment as in Section~\ref{sec:5} on MNIST. Here we choose a different number of projections $P$ between $10$ and $1000$. In Figure~\ref{fig:fid_plot} we illustrate the progress of the FID value for an increasing number of networks and a different number of projections. Obviously, the gradient of the MMD functional is not well-approximated by just using $P=10$ or $P=100$ projections and thus the MMD flow does not converges. Once the gradient of the functional is well-approximated, a higher number of projections leads only to a small improvement, see the difference between $P=500$ and $P=1000$.
\begin{figure*}[t!]
\centering
\begin{subfigure}[t]{.5\textwidth}
\includegraphics[width=\linewidth]{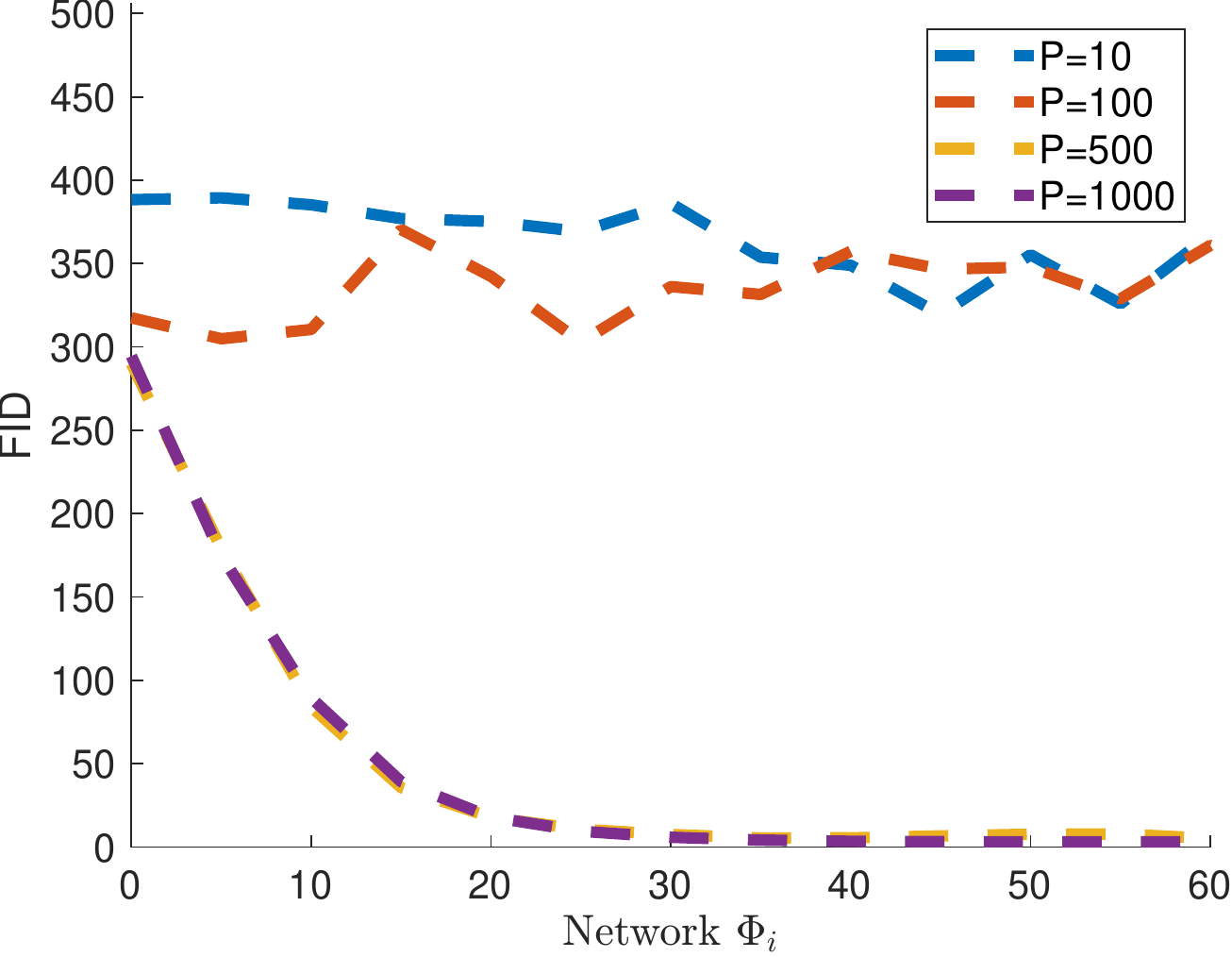}
\end{subfigure}
\caption{Illustration of the FID value of the Sliced MMD Flow on MNIST for different number of projections.} \label{fig:fid_plot}
\end{figure*}

\section{Implementation Details}\label{app:implementation}

The code is available online at \url{https://github.com/johertrich/sliced_MMD_flows}.

We use UNets $(\Phi)_{l=1}^L$\footnote{modified from \url{https://github.com/hojonathanho/diffusion/blob/master/diffusion_tf/models/unet.py}} with 3409633 trainable parameters for MNIST and FashionMNIST and 2064035 trainable paramters for CIFAR10. The networks are trained using Adam \citep{KB2015} with a learning rate of $0.001$. All flows are simulated with a step size $\tau=1$. We stop the training of our generative sliced MMD flow when the FID between the generated samples and some validation samples does not decrease twice. Then we take the network with the best FID value to the validation set. The validation samples are the last 10000 training samples from the corresponding dataset which were not used for training the generative sliced MMD flow. The training of the generative MMD flow takes between 1.5 and 3 days on a NVIDIA GeForce RTX 2060 Super GPU, depending on the current GPU load by other processes.
To avoid overfitting, we choose a relatively small number of optimization steps within the training of the networks $\Phi_l$, which corresponds to an early-stopping technique.

\textbf{MNIST.}
We draw the first $M=20000$ target samples from the MNIST training set and $N=20000$ initial samples uniformly from $[0,1]^d$. 
Then we simulate the momentum MMD flow using $P=1000$ projections for 32 steps and train the network for 2000 optimizer steps with a batch size of 100. 
After each training of the network, we increase the number of flow steps by $\min(2^{5+l},2048)$ up to a maximal number of 30000 steps, where $l$ is the iteration of the training procedure, see Algorithm~\ref{alg:training_gen_MMD_flows}.  We choose the momentum parameter $m=0.7$ and stop the whole training after $L=55$ networks.

\textbf{FashionMNIST.}
Here we draw the first $M=20000$ target samples from the FashionMNIST training set and $N=20000$ initial samples uniformly from $[0,1]^d$. Then we simulate the momentum MMD flow using $P=1000$ projections for 32 steps and train the network for 2000 optimizer steps with a batch size of 100. After each training of the network, we increase the number of flow steps by $\min(2^{5+l},2048)$ up to a maximal number of 50000 steps, where $l$ is the iteration of the training procedure. The momentum parameter $m$ is set to 0.8. We stop the whole training after $L=67$ networks.

\textbf{CIFAR.}
We draw the first $M=30000$ target samples from the CIFAR10 training set.
Here we consider a \textit{pyramidal schedule}, where the key idea is to run the particle flow on different resolutions, from low to high sequentially. First, we downsample the target samples by a factor 8 and draw $N=30000$ initial samples uniformly from $[0,1]^{\frac{d}{64}}$. 
Then we simulate the momentum MMD flow in dimension $d=48$ using $P=500$ projections for 32 steps and train the network for 5000 optimizer steps with a batch size of 100. After each training of the network, we increase the number of flow steps by $\min(2^{5+l},1024)$ up to a maximal number of 30000 steps, where $l$ is the iteration of the training procedure. The momentum parameter $m$ is increased after each network training by 0.01 up to 0.8, beginning with $m=0$ in the first flow step. 
We increase the resolution of the flow after $600000$ flow steps by a factor $2$ and add Gaussian noise on the particles in order to increase the intrinsic dimension of the images, such that the second resolution is of dimension $d=192$. Following here the same procedure as before for the second resolution and the third resolution of $d=768$, we change the projections in the final resolution of $d=3072$. Instead of using projections uniformly sampled from $\Sp^{d-1}$, we consider \textit{locally-connected} projections as in \citep{DLPYL2023,nguyen2022revisiting}. The idea is to extract patches of the images $\zb{x}^k$ at a random location in each step $k$ and instead consider the particle flow in the patch dimension. In order to apply these locally-connected projections at different resolutions, we also upsample the projections to different scales. Here we choose a patch size of $7 \times 7$ and consider the resolutions $7,14,21,28$. We stop the whole training after $L=86$ networks.

Note that herewith we introduced an inductive bias, since we do not uniformly sample from $[0,1]^d$, but empiricially this leads to a significant acceleration of the flow. A more comprehensive discussion can be found in \citep{DLPYL2023,nguyen2022revisiting}

\textbf{CelebA.}
We draw the first $M=20000$ target samples from the CelebA training set.
Again, we consider a pyramidal schedule as for CIFAR10, but we increased the number of flow steps by $\min(2^{5+l},8192)$ up to a maximal number of 100000 steps.
We increase the resolutions of the flow after $700000$, $900000$ and $700000$ flow steps by a factor $2$ and add Gaussian noise on the particles in order to increase the intrinsic dimension of the images. We also use locally-connected projections with a patch size of $7 \times 7$ and consider the resolutions $7,14,21,28$ for resolution $32 \times 32$ and $7,14,21,28,35,42,49$ for resolution $64 \times 64$. We stop the whole training after $L=71$ networks.

\section{Additional examples} \label{app:add_examples}

In Figure~\ref{fig:add_generated_samples} we show more generated samples of our method for MNIST, FashionMNIST and CIFAR10.

\begin{figure*}[t!]
\centering
\begin{subfigure}[t]{1\textwidth}
\includegraphics[width=\linewidth]{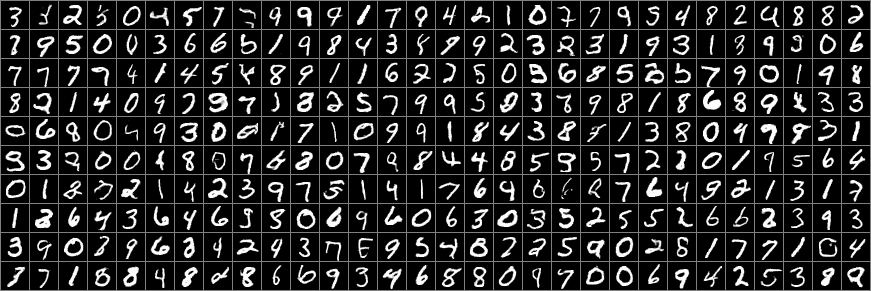}
\caption{MNIST}
\end{subfigure}

\begin{subfigure}[t]{1\textwidth}
\includegraphics[width=\linewidth]{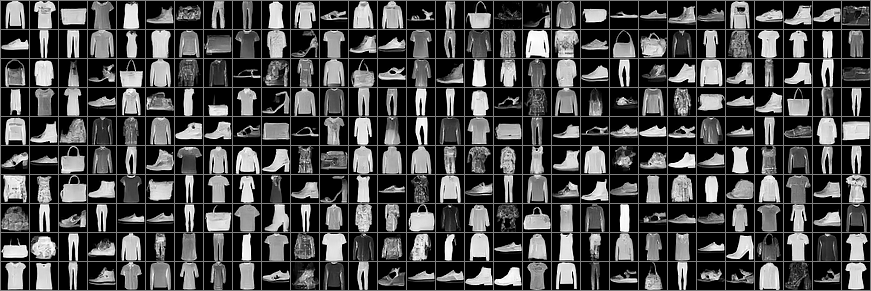}
\caption{FashionMNIST}
\end{subfigure}

\begin{subfigure}[t]{1\textwidth}
\includegraphics[width=\linewidth]{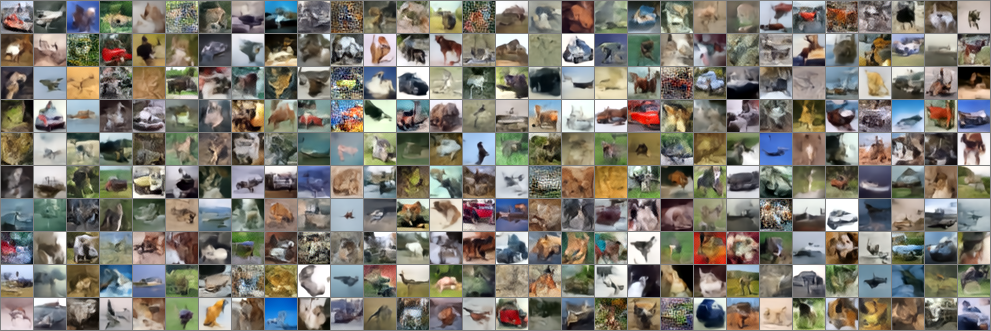}
\caption{CIFAR10}
\end{subfigure}

\begin{subfigure}[t]{1\textwidth}
\includegraphics[width=\linewidth]{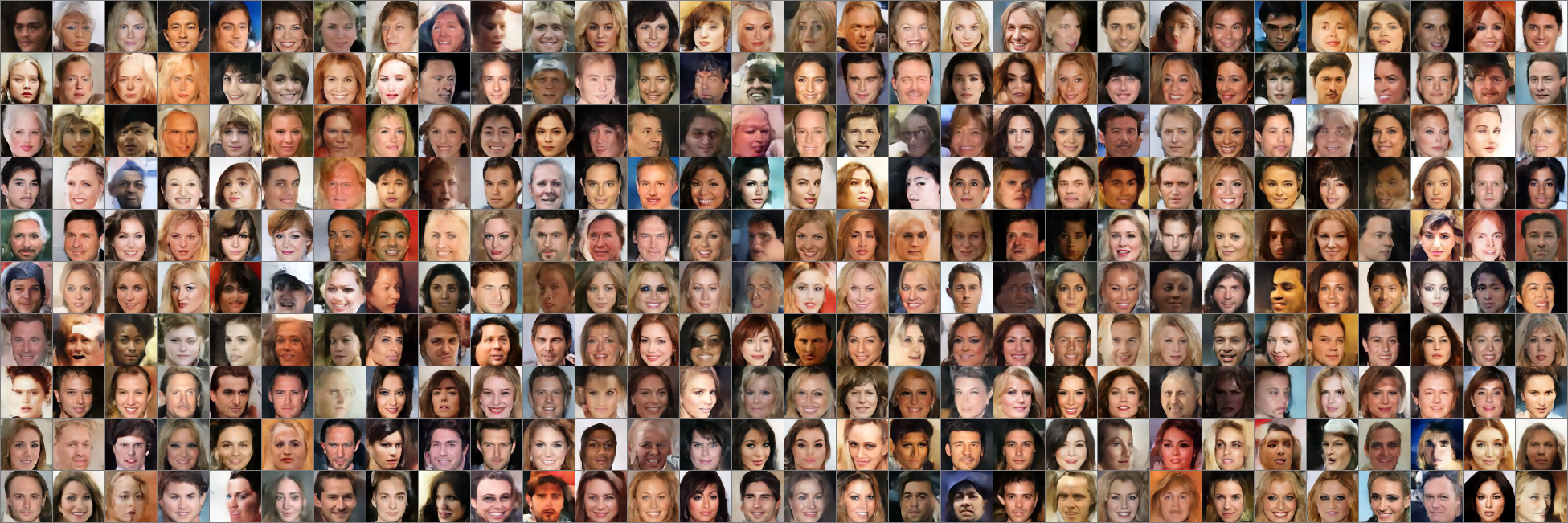}
\caption{CelebA}
\end{subfigure}
\caption{Additional generated samples (from top to bottom) of MNIST, FashionMNIST, CIFAR10 and CelebA.}
\label{fig:add_generated_samples}
\end{figure*}

In Figure~\ref{fig:diff_imgs}, we compare generated MNIST samples with the closest samples from the training set. We observe that they are significantly different. Hence, our method generates really new samples and is not just reproducing the samples from the training set. In contrast, in Figure~\ref{fig:diff_imgs_flow} we compare the particle flow samples with the closest samples from the training set. Obviously, the samples of the particle flow approximate exactly the training samples. This highlights the important role of the networks: We can interpolate between the training points in order to generalize the dataset.

\begin{figure*}[t!]
\centering
\begin{subfigure}[t]{\textwidth}
\includegraphics[width=\linewidth]{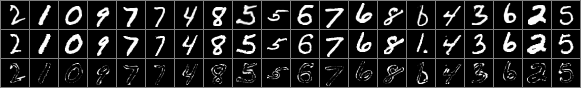}
\caption{MNIST}
\end{subfigure}

\begin{subfigure}[t]{\textwidth}
\includegraphics[width=\linewidth]{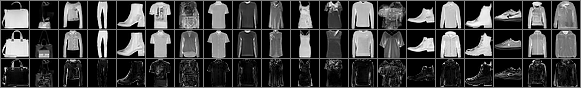}
\caption{FashionMNIST}
\end{subfigure}

\begin{subfigure}[t]{\textwidth}
\includegraphics[width=\linewidth]{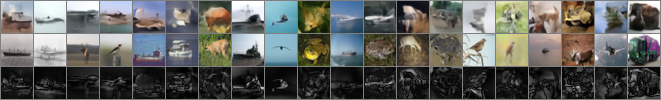}
\caption{CIFAR10}
\end{subfigure}

\begin{subfigure}[t]{\textwidth}
\includegraphics[width=\linewidth]{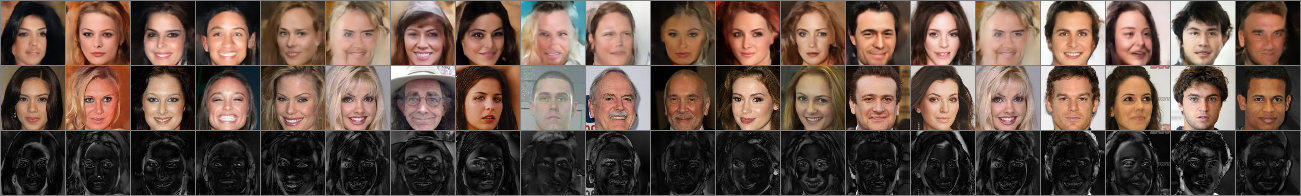}
\caption{CelebA}
\end{subfigure}
\caption{
Generated samples (top), $L_2$-closest samples from the training set (middle) and the pixelwise distance between them (bottom).
}
\label{fig:diff_imgs}
\end{figure*}

\begin{figure*}[t!]
\centering
\begin{subfigure}[t]{\textwidth}
\includegraphics[width=\linewidth]{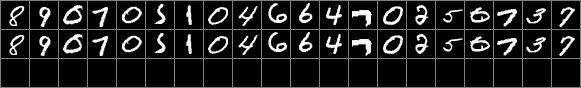}
\end{subfigure}
\caption{
Samples of the particle flow (top), closest samples from the training set (middle) and the pixelwise distance between them (bottom). The mean PSNR between these flow samples and the corresponding closest training images is 82.29. 
}
\label{fig:diff_imgs_flow}
\end{figure*}

\begin{table}[t]
\caption{FID scores of generated samples for training set and test set}  
\label{table:FID_train_test}
\centering
\scalebox{.71}{
\begin{tabular}[t]{c|ccccc} 
             & MNIST   & FashionMNIST & CIFAR10  & CelebA \\
\hline
training set &  2.7    &  10.6        &  53.0    &  32.7       \\
test set     &  3.1    &  11.3        &  54.8    &  32.1        \\
\end{tabular}}
\end{table}

\end{document}